\documentclass{article}

\usepackage{microtype}
\usepackage{graphicx}
\usepackage{subcaption}
\usepackage{booktabs} 

\usepackage{hyperref}

\usepackage[preprint]{icml2026}

\usepackage{amsmath}
\usepackage{amssymb}
\usepackage{mathtools}
\usepackage{amsthm}

\usepackage[capitalize,noabbrev]{cleveref}

\theoremstyle{plain}
\newtheorem{theorem}{Theorem}[section]
\newtheorem{proposition}[theorem]{Proposition}
\newtheorem{lemma}[theorem]{Lemma}
\newtheorem{corollary}[theorem]{Corollary}
\theoremstyle{definition}

\theoremstyle{remark}
\newtheorem{remark}[theorem]{Remark}

\usepackage[textsize=tiny]{todonotes}

\icmltitlerunning{CCAR: Intrinsic Robustness as an Emergent Geometric Property}

\begin{document}

\twocolumn[
  \icmltitle{Class-Conditional Activation Regularization (CCAR): \\
Intrinsic Robustness as an Emergent Geometric Property}


  \icmlsetsymbol{equal}{*}

\begin{icmlauthorlist}
\icmlauthor{Akash Samanta}{tiu}
\icmlauthor{Manish Pratap Singh}{drdo}
\icmlauthor{Debasis Chaudhuri}{tiu}
\end{icmlauthorlist}

\icmlaffiliation{tiu}{Techno India University, Salt Lake, Kolkata, India}
\icmlaffiliation{drdo}{DRDO Young Scientist Laboratory - CT, Chennai, India}

\icmlcorrespondingauthor{Akash Samanta}{akashsamanta.web@gmail.com}

  \icmlkeywords{Machine Learning, Mechanistic Interpretability, Deep Learning, Feature Geometry, Representation Learning, Learning Representation, Feature Learning, Geometric Regularization, Robustness Convolutional Neural Network}

  \vskip 0.3in
]



\printAffiliationsAndNotice{}  

\begin{abstract}
  Standard supervised learning optimizes for predictive accuracy but remains agnostic to the internal geometry of learned features, often yielding representations that are entangled and brittle. We propose Class-Conditional Activation Regularization (CCAR) to explicitly engineer the feature space, imposing a block-diagonal structure via a soft inductive bias. By shaping the latent representation to confine class energy to orthogonal subspaces, we create an intrinsic geometric scaffold that naturally filters noise and adversarial perturbations. We provide theoretical analysis linking this structural constraint to the maximization of the Fisher Discriminant Ratio, establishing a formal connection between geometric disentanglement and algorithmic stability. Empirically, this approach demonstrates that robustness is an emergent property of a well-engineered feature space, significantly outperforming baselines on label noise and input corruption benchmarks.
\end{abstract}

\section{Introduction}
\label{sec:intro}

The remarkable success of deep neural networks is often attributed to their ability to automatically discover hierarchical representations that disentangle complex data manifolds \cite{bengio2013representation}. Yet, standard supervised training predominantly focuses on the final output: optimizing a prediction error. While this approach effectively minimizes empirical risk, it remains largely agnostic to the internal structure of the learned features. As a result, networks often achieve high accuracy by finding degenerate geometries that are sufficient for classification on clean data but brittle under stress, often collapsing when exposed to label noise \cite{zhang2017understanding}, adversarial perturbations, or distribution shifts \cite{geirhos2020shortcut}.

This disconnect raises a fundamental question: can we explicitly shape the internal geometry of deep representations to be intrinsically robust, without relying on external augmentations or complex training pipelines?

Current approaches to robustness typically intervene at the periphery, such as modifying the data with augmentations \cite{cubuk2019autoaugment} and adversarial training \cite{madry2018towards}, or altering the architecture with specialized modules. While effective, these methods often treat the symptoms of fragile geometry rather than the cause. We hypothesize that robustness is an emergent property of well-structured feature spaces \cite{ilyas2019adversarial}. Specifically, if latent representations are encouraged to organize into distinct, stable subspaces aligned with semantic identity, the network should naturally resist perturbation without needing to encounter every possible noise pattern during training.

In this work, we propose Class-Conditional Geometric Regularization (CCAR), a method to bridge the gap between predictive optimization and structural robustness. Unlike standard regularization that constrains model complexity, CCAR acts as a soft inductive bias on the activations themselves. By designating soft, contiguous subspaces for each class within the high-dimensional feature manifold, we encourage the network to concentrate feature energy into class-specific directions. Crucially, while this enforces class-specific partitions, it does not enforce rigid sparsity; neurons remain polysemantic within their assigned subspaces, and the aggregate geometry becomes significantly more coherent.

\begin{figure}[t]
\centering
\includegraphics[width=\linewidth]{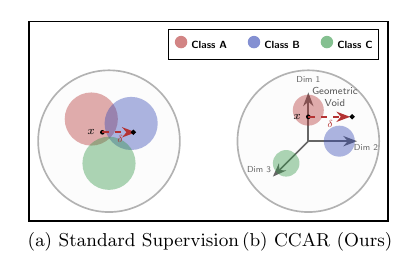}
\vspace{-20pt} 
\caption{Geometric intuition of emergent robustness.
(Left) Standard Supervision: Training typically induces dense, entangled class manifolds with shared boundaries. A small perturbation ($\delta$) easily pushes a sample $x$ (Class A) into a competing manifold (Class B). 
(Right) CCAR (Ours): By enforcing a block-diagonal structure, classes are confined to orthogonal subspaces. This creates a large geometric void (inactive feature space) between manifolds. The same perturbation $\delta$ now projects into this void rather than a competing class, preserving the correct prediction.}
\label{fig:geometric_intuition}
\vspace{-10pt} 
\end{figure}

Our approach draws inspiration from the principle that semantic separation should exist not just at the decision boundary, but throughout the latent representation. We demonstrate that a simple geometric constraint, implemented as a smooth quadratic penalty accompanying the primary supervision, is sufficient to induce this structure. This complementary regularization forces the network to learn features that are not only discriminative but also compact and centered, reducing the manifold instability where adversarial examples and noise typically hide.

We summarize our primary contributions as follows:
\begin{itemize}
    \item \textbf{Geometric Inductive Bias:} We introduce Class-Conditional Activation Regularization (CCAR), a lightweight framework that shapes intermediate feature geometry. We show that this simple intervention significantly improves internal cluster separation and reduces feature overlap.
    \item \textbf{Robustness via Structure:} We provide empirical evidence that shaping geometry leads to emergent robustness. CCAR outperforms standard baselines on label noise, input corruptions, and adversarial attacks, validating that robustness can be achieved by organizing internal representations rather than just optimizing decision boundaries.
    \item \textbf{Theoretical \& Empirical Analysis:} We analyze the mechanism of CCAR through both a theoretical lens by characterizing it as an energy minimization problem under class constraints, and extensive geometric analysis showing that improvements in metrics like the Fisher discriminant ratio directly correlate with downstream robustness.
\end{itemize}

\section{Background and Related Work}
\label{sec:background}

Robust representation learning formally aims to identify an encoder $f_\theta: \mathcal{X} \to \mathbb{R}^D$ such that the induced feature geometry remains stable under perturbations of the input distribution $P(X)$ or the label distribution $P(Y)$. While standard supervised learning optimizes for conditional predictive accuracy, it does not inherently constrain the topology of the latent space $\mathbb{R}^D$, often resulting in representations that are separable but geometrically fragile. This section reviews the geometric properties induced by standard loss functions, the invariance limitations of pairwise contrastive objectives, and existing mechanisms for enforcing robustness.

\subsection{Feature Geometry in Supervised Learning}

In the standard supervised classification setting, a network learns a mapping $f_\theta(x)$ and a linear classifier $W = [w_1, \dots, w_C]^\top$ to minimize the Cross-Entropy (CE) loss:
\begin{equation}
\mathcal{L}_{\text{CE}} = - \frac{1}{N} \sum_{i=1}^N \log \frac{\exp(w_{y_i}^\top f_\theta(x_i))}{\sum_{j=1}^C \exp(w_j^\top f_\theta(x_i))}
\end{equation}
Analytically, minimizing $\mathcal{L}_{\text{CE}}$ drives the angle between the feature vector $f_\theta(x_i)$ and the target weight vector $w_{y_i}$ to zero, while maximizing the angle with $w_{j \neq y_i}$. However, this optimization is under-determined with respect to the feature magnitude and the distribution of energy across dimensions. As long as the dot product is maximized, the encoder is free to use arbitrary subspaces or highly correlated neurons, leading to the feature collapse phenomenon where intra-class variability is compressed only along directions relevant to the specific training set, leaving the representation vulnerable to out-of-distribution noise or adversarial shifts \cite{papyan2020prevalence}.

Auxiliary objectives such as Center Loss \cite{wen2016discriminative} and ArcFace \cite{deng2019arcface} attempt to regularize this space by enforcing metric constraints---minimizing Euclidean distances to centroids or enforcing angular margins on a hypersphere. While effective for recognition tasks, these methods operate on the relative metric between points, rather than constraining the absolute activation patterns of the features themselves.

\subsection{Contrastive Learning and Geometric Invariance}

Contrastive Learning (CL) creates structure by aligning positive pairs $(x, x^+)$ and repelling negatives $x^-$. A canonical objective is the InfoNCE loss:
\begin{equation}
\resizebox{.91\linewidth}{!}{$
    \mathcal{L}_{\text{InfoNCE}} = - \mathbb{E} \left[ \log \frac{\exp(f(x)^\top f(x^+) / \tau)}{\exp(f(x)^\top f(x^+) / \tau) + \sum_{k} \exp(f(x)^\top f(x_k^-) / \tau)} \right]
$}
\end{equation}
A critical limitation of CL objectives for structured feature learning is their invariance to orthogonal transformations. Since the loss depends solely on dot products (or Euclidean distances), the optimal solution is defined only up to a rotation.

\begin{proposition}[Rotation Symmetry]
For any loss $\mathcal{L}$ depending only on pairwise inner products $\langle f(x), f(x') \rangle$, if $f^*$ is a minimizer, then $R f^*$ is also a minimizer for any orthogonal matrix $R \in O(D)$.
\end{proposition}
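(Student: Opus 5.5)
The plan is to show that the composed map $Rf^*$, defined by $(Rf^*)(x) = R\,f^*(x)$, yields exactly the same loss value as $f^*$. Minimality of $Rf^*$ then follows immediately, without any analysis of the loss landscape.

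\textbf{Step 1 (formalize the hypothesis).} I will write the hypothesis as: there is a function $\Phi$ with
$\mathcal{L}(f) = \Phi\bigl(\{\langle f(x), f(x')\rangle\}_{x,x'}\bigr)$
for every encoder $f$. Here $x,x'$ range over the relevant pairs, namely the training points, or anchor/positive/negative triples as in $\mathcal{L}_{\text{InfoNCE}}$, possibly inside an expectation. Any additional dependence on labels or on the sampling distribution is fixed and does not involve $f$, so it may be absorbed into $\Phi$.

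\textbf{Step 2 (invariance of the Gram entries).} For $R \in O(D)$ we have $R^\top R = I$. Hence for every pair
$\langle Rf^*(x), Rf^*(x')\rangle = f^*(x)^\top R^\top R\, f^*(x') = \langle f^*(x), f^*(x')\rangle.$
So the entire collection of pairwise inner products is unchanged, and by Step 1, $\mathcal{L}(Rf^*) = \mathcal{L}(f^*)$. Euclidean distances $\|f(x)-f(x')\|^2$ are themselves combinations of inner products, so losses written in terms of distances are covered as well. If the loss also uses a temperature $\tau$ or a normalization $f/\|f\|$, these are preserved too, because $\|Rv\| = \|v\|$.

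\textbf{Step 3 (conclude).} Since $f^*$ is a minimizer, $\mathcal{L}(f^*) \le \mathcal{L}(f)$ for every admissible $f$. Therefore $\mathcal{L}(Rf^*) = \mathcal{L}(f^*) \le \mathcal{L}(f)$, so $Rf^*$ is also a minimizer.

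The only point needing care is admissibility: $Rf^*$ must lie in the feasible class over which $\mathcal{L}$ is minimized. If that class is the set of all maps $\mathcal{X}\to\mathbb{R}^D$, this is automatic. If $f_\theta$ is a network ending in a linear layer $A$, we can absorb $R$ into that layer by replacing $A$ with $RA$. I would state this assumption explicitly, noting that the rest of the argument is a one-line computation.
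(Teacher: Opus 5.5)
Your proof is correct and is the natural formalization of what the paper offers. The paper gives no formal proof, only the preceding remark that a loss depending solely on dot products or Euclidean distances has its optimum defined up to rotation; your computation $\langle Rf^*(x), Rf^*(x')\rangle = f^*(x)^\top R^\top R\, f^*(x') = \langle f^*(x), f^*(x')\rangle$, followed by $\mathcal{L}(Rf^*) = \mathcal{L}(f^*) \le \mathcal{L}(f)$, is exactly that argument made precise.

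Your admissibility caveat is a genuine addition the paper glosses over. The conclusion requires the hypothesis class to be closed under $f \mapsto Rf$. This holds when the features come out of a final linear layer, where you replace $(A,b)$ by $(RA,Rb)$. It fails for general $R$ when features are constrained to be nonnegative, for example post-ReLU features, and that is precisely the mechanism by which NCL breaks the symmetry.
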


This symmetry implies that CL cannot enforce axis-aligned disentanglement or specific subspace allocations without additional constraints. Recent efforts such as Non-negative Contrastive Learning (NCL) \cite{wang2024non} attempt to break this symmetry by enforcing non-negativity to induce sparse, axis-aligned features. However, NCL operates within the self-supervised regime to improve interpretability. Even Supervised Contrastive Learning (SupCon) \cite{khosla2020supervised}, which utilizes label information to form positive pairs, inherits this rotational invariance. Consequently, while CL improves global separability, it does not guarantee that specific classes occupy stable, distinct coordinate subspaces.

\subsection{Robustness via Inductive Biases}

Approaches to enforce robustness typically intervene either at the data level or the optimization level. Data-level methods, such as Adversarial Training, solve a min-max problem against a perturbation set $\mathcal{S}$:
\begin{equation}
\min_\theta \mathbb{E} \left[ \max_{\delta \in \mathcal{S}} \mathcal{L}(f_\theta(x + \delta), y) \right]
\end{equation}
This explicitly constructs the decision boundary around worst-case noise but is computationally intensive \cite{madry2018towards}. Optimization-level methods, such as DivideMix \cite{li2020dividemix} or robust loss functions \cite{ghosh2017robust}, attempt to filter or down-weight noisy samples.

A third paradigm, geometric regularization, posits that robustness is an emergent property of feature purity. If feature activations for different classes are confined to orthogonal subspaces, the probability of a random perturbation $\epsilon$ crossing a decision boundary decreases, as it requires specific, high-magnitude projection onto the incorrect class's subspace. This geometric perspective motivates the design of regularizers that explicitly penalize feature leakage into forbidden regions of the latent space. While \cite{lezama2018ole} encourage general orthogonality via covariance regularization, we enforce strict, predefined axis-aligned partitions directly on the activations to ensure robustness. 

\section{Class-Conditional Geometric Regularization}
\label{section3}

\begin{algorithm}[tb]
   \caption{CCAR-L2 Supervised Training}
   \label{alg:ccar}
\begin{algorithmic}
   \STATE {\bfseries Require:} Batch $\mathcal{B} = \{(x_i, y_i)\}_{i=1}^{|\mathcal{B}|}$, Encoder $f_\theta$, Classifier $W$, Strength $\lambda$
   
   \STATE {\bfseries Forward Pass:}
   \STATE $H \leftarrow f_\theta(X)$
   \STATE $Z \leftarrow W H$
   \STATE $\mathcal{L}_{\text{CE}} \leftarrow \text{CrossEntropy}(Z, Y)$
   
   \STATE {\bfseries Geometric Regularization:}
   \STATE Initialize $\mathcal{L}_{\text{reg}} \leftarrow 0$
   \FOR{$i = 1$ {\bfseries to} $|\mathcal{B}|$}
      \STATE $m_i \leftarrow \text{Mask}(y_i)$ 
      \STATE $E_i \leftarrow \| m_i \odot H_i \|_2^2$
      \STATE $\mathcal{L}_{\text{reg}} \leftarrow \mathcal{L}_{\text{reg}} + \frac{1}{\|m_i\|_1} E_i$
   \ENDFOR
   
   \STATE {\bfseries Backward Pass:}
   \STATE $\mathcal{L}_{\text{total}} \leftarrow \mathcal{L}_{\text{CE}} + \frac{\lambda}{|\mathcal{B}|} \mathcal{L}_{\text{reg}}$
   \STATE Update $\theta, W$ via SGD using $\nabla \mathcal{L}_{\text{total}}$
\end{algorithmic}
\end{algorithm}

The standard supervised learning paradigm optimizes an encoder $f_\theta: \mathcal{X} \to \mathbb{R}^D$ and a linear classifier $W$ to minimize the cross-entropy loss. While this objective successfully maximizes the separation between class centroids, it remains agnostic to the internal topology of the feature space $\mathbb{R}^D$. Consequently, the learned representations often exhibit high variance in directions orthogonal to the decision boundary; acting as nuisance directions that contribute nothing to classification but serve as avenues for adversarial perturbations \cite{geirhos2020shortcut,ilyas2019adversarial}.

To remedy this, we introduce Class-Conditional Activation Regularization (CCAR), a geometric framework that explicitly imposes a block-diagonal covariance structure on latent representations. This section details the construction of the subspace constraints, the derivation of the minimum-energy objective, and the optimization procedure.

\subsection{Subspace Partitioning}

We formally define a robust latent geometry as one where class-specific information is confined to distinct, orthogonal subspaces. As illustrated in Figure~\ref{fig:geometric_intuition}, this partitioning transforms the entangled manifolds typical of standard supervision (Figure~\ref{fig:geometric_intuition}(a)) into separated structures (Figure~\ref{fig:geometric_intuition}(b)) where each class is pinned to its designated axes. We partition the feature dimension $D$ into $C$ disjoint, contiguous index sets $\mathcal{P}=\{\mathcal{I}_{1},...,\mathcal{I}_{C}\}$ via a fixed assignment (i.e., $\mathcal{I}_c = \{(c-1)K + 1, \dots, cK\}$). To ensure balanced capacity, we enforce $|\mathcal{I}_c| = K = \lfloor D/C \rfloor$; any remainder dimensions $D \pmod C$ are excluded from all active sets and thus globally suppressed..

For an input $x$ associated with label $y$, the ideal feature representation $h = f_\theta(x)$ should reside strictly within the subspace spanned by the basis vectors in $\mathcal{I}_y$. We define the \textit{forbidden region} $\mathcal{O}_y$ as the orthogonal complement of the active set:
\begin{equation}
\mathcal{O}_y = \{1, \dots, D\} \setminus \mathcal{I}_y = \bigcup_{k \neq y} \mathcal{I}_k
\end{equation}
To operationalize this constraint, we define a class-conditional projection operator $M(y) \in \{0, 1\}^D$ that isolates the forbidden dimensions:
\begin{equation}
(M(y))_j = \mathbb{I}(j \in \mathcal{O}_y)
\end{equation}

\begin{figure*}[!t]
\vskip 0.2in
\begin{center}
    \begin{minipage}[b]{0.32\textwidth}
        \centering
        \includegraphics[width=\linewidth]{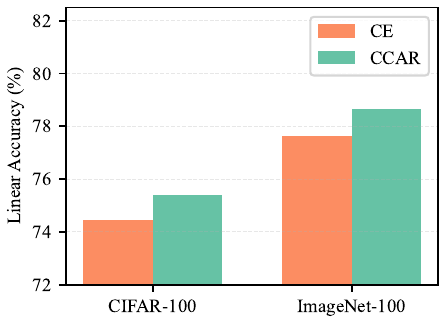}
        \centerline{\small (a) Linear Accuracy}
    \end{minipage}
    \hfill
    \begin{minipage}[b]{0.32\textwidth}
        \centering
        \includegraphics[width=\linewidth]{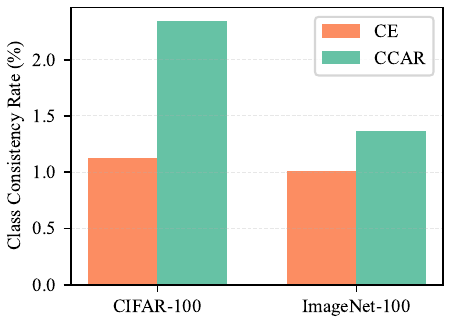}
        \centerline{\small (b) Consistency Rate}
    \end{minipage}
    \hfill
    \begin{minipage}[b]{0.32\textwidth}
        \centering
        \includegraphics[width=\linewidth]{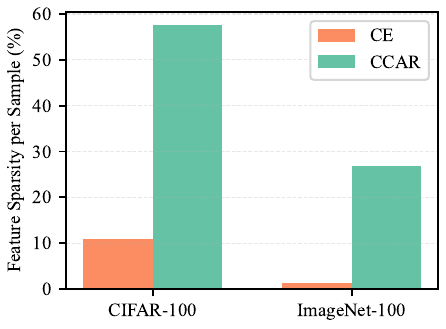}
        \centerline{\small (c) Feature Sparsity}
    \end{minipage}

    \caption{Comparison between Cross-Entropy (CE) and Class-Conditional Activation Regularization (CCAR): 
    (a) linear probing accuracy; 
    (b) class consistency rate, measuring the proportion of activated samples that belong to their most frequent class along each feature dimension; 
    (c) feature sparsity, the average proportion of zero elements ($|x|<1\text{e-}5$) in the features of each test sample. CCAR significantly improves consistency and sparsity without sacrificing accuracy.}
    \label{fig:geometric_metrics}
\end{center}
\vskip -0.2in
\end{figure*}

\textbf{Remark on Polysemanticity.} It is important to distinguish between \textit{subspace constraints} and \textit{neuron monosemanticity}. CCAR allocates a dedicated subspace $\mathcal{I}_y$ to each class, but it imposes no sparsity constraints \textit{within} that subspace. The network retains the freedom to learn distributed, polysemantic representations, where single neurons encode multiple intra-class attributes \cite{olah2020zoom, elhage2022toy}, within the active block. This structure preserves the coding efficiency of high-dimensional spaces while eliminating cross-class interference.

\subsection{The CCAR-L2 Objective}
\label{l2choice}

We formulate the regularization as an energy minimization problem on the forbidden projection. Let $h = f_\theta(x)$ be the feature representation. The geometric loss is defined as the mean squared energy of the residual energy into forbidden dimensions:
\begin{equation}
\mathcal{L}_{\text{CCAR}}(x, y) = \frac{1}{|\mathcal{O}_y|} \| M(y) \odot h \|_2^2 = \frac{1}{|\mathcal{O}_y|} \sum_{j \in \mathcal{O}_y} h_j^2
\end{equation}
We adopt the squared $L_2$ norm rather than sparsity-inducing norms (e.g., $L_1$) to ensure optimization stability. The $L_2$ penalty provides a Lipschitz continuous gradient field, offering a restorative force that scales linearly with deviation. This prevents the oscillatory convergence near the manifold origin that is characteristic of non-smooth penalties \cite{bertsekas1999nonlinear}, facilitating precise convergence to the subspace. A comprehensive theoretical and empirical comparison justifying the choice of $L_2$ over alternative norms is provided in Appendix \ref{app:regularization}.

\subsection{Optimization Algorithm}

The final training objective is a composite loss that balances discriminative performance with geometric compression. We combine the standard Cross-Entropy loss ($\mathcal{L}_{\text{CE}}$) with the CCAR regularizer:
\begin{equation}
\min_{\theta, W} \mathcal{J} = \mathbb{E}_{(x, y) \sim \mathcal{D}} \left[ \mathcal{L}_{\text{CE}}(W f_\theta(x), y) + \lambda \mathcal{L}_{\text{CCAR}}(f_\theta(x), y) \right]
\end{equation}
where $\lambda$ is a scalar hyperparameter controlling the regularization strength.
To ensure the practical feasibility of this objective, we characterize the optimization landscape under the $L_2$ penalty. Unlike non-smooth regularizers, the quadratic nature of our constraint ensures stable convergence behavior.

\textbf{Computational Complexity.} The overhead introduced by CCAR is negligible. The mask generation and element-wise multiplication are $O(D)$ operations, which can be fully parallelized on GPU hardware. Unlike contrastive methods that require $O(N^2)$ pairwise computations or costly negative mining \cite{chen2020simple}, CCAR operates independently on each sample, scaling linearly $O(N)$ with batch size.

\section{Theoretical Analysis}
\label{section4}
In this section, we analyze the geometric properties induced by the CCAR-L2 objective. We demonstrate that minimizing the forbidden energy is statistically equivalent to maximizing the Fisher Discriminant Ratio in the latent space. Furthermore, we derive a geometric stability bound, proving that the block-diagonal structure creates a certified margin against additive perturbations.

\subsection{Covariance Compression and Fisher Ratio}

We first characterize the effect of CCAR on the statistical structure of the learned features. Let $\Sigma_c = \mathbb{E}[ (h - \mu_c)(h - \mu_c)^\top | y=c ]$ denote the class-conditional covariance matrix for class $c$. In unregularized networks, $\Sigma_c$ is typically full-rank, indicating variance in all directions.

\begin{theorem}[Trace Minimization]
\label{thm:trace_minimization}
Assuming class means are aligned with their active subspaces ($P_{\mathcal{O}_c}\mu_c = 0$), minimizing the expected CCAR-L2 loss is equivalent to minimizing the trace of the class-conditional covariance matrix projected onto the forbidden subspace $\mathcal{O}_c$.
\end{theorem}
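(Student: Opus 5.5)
The plan is to write the expected CCAR-L2 loss as a sum of class-conditional second moments restricted to the forbidden coordinates, and then use the bias–variance decomposition of a second moment. Let $P_{\mathcal{O}_c} = \mathrm{diag}(M(c))$ be the coordinate projection onto the forbidden dimensions. It is symmetric and idempotent, so
\[
\|M(c)\odot h\|_2^2 = \|P_{\mathcal{O}_c} h\|_2^2 = \mathrm{tr}\big(P_{\mathcal{O}_c} h h^\top P_{\mathcal{O}_c}\big).
\]
Conditioning on the label, I would write
\[
\mathbb{E}[\mathcal{L}_{\text{CCAR}}] = \sum_{c=1}^C \pi_c \,\frac{1}{|\mathcal{O}_c|}\, \mathbb{E}\big[\|P_{\mathcal{O}_c} h\|_2^2 \mid y=c\big],
\]
where $\pi_c$ is the class prior. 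Since the partition is balanced, $|\mathcal{O}_c| = D-K$ is the same constant for every $c$, so the normalization is a common positive factor and does not affect minimizers.

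Next I would decompose each conditional second moment. Write $h = \mu_c + (h-\mu_c)$ and expand. The cross term vanishes because $\mathbb{E}[h-\mu_c \mid y=c]=0$. This gives
\[
\mathbb{E}\big[\|P_{\mathcal{O}_c} h\|^2 \mid y=c\big] = \|P_{\mathcal{O}_c}\mu_c\|^2 + \mathrm{tr}\big(P_{\mathcal{O}_c}\Sigma_c P_{\mathcal{O}_c}\big).
\]
The covariance term uses linearity of trace and expectation, together with the cyclic property.

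Under the stated assumption $P_{\mathcal{O}_c}\mu_c = 0$, the mean term drops out. The expected loss then equals $\frac{1}{D-K}\sum_c \pi_c\,\mathrm{tr}(P_{\mathcal{O}_c}\Sigma_c P_{\mathcal{O}_c})$. This is a positive multiple of the prior-weighted total forbidden-subspace trace, so the two objectives have identical minimizers and are equivalent up to scale. I would also note that $\mathrm{tr}(P_{\mathcal{O}_c}\Sigma_c P_{\mathcal{O}_c}) = \sum_{j\in\mathcal{O}_c}(\Sigma_c)_{jj}$, which is the sum of the per-coordinate variances on the forbidden block.

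The main subtlety is making precise what "equivalent" means. Over a parametric family $f_\theta$, the assumption $P_{\mathcal{O}_c}\mu_c=0$ is a constraint on $\theta$, not an identity that holds automatically. So I would state the result as the equality of the two functionals on the set of encoders satisfying the assumption. Without the assumption, the decomposition still shows that the CCAR loss is an upper bound on the forbidden trace. The difference is exactly the nonnegative mean-leakage term, which the regularizer also drives to zero.

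A second point to handle is the per-sample normalization $1/|\mathcal{O}_y|$. It must be constant across classes for the reduction to a plain (prior-weighted) trace. This follows from the balanced partition $|\mathcal{I}_c|=K$. I would add a remark that the remainder dimensions, which lie in every $\mathcal{O}_c$, are covered by the same argument.
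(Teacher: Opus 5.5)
Your proposal is correct and follows essentially the same route as the paper. The paper also writes the per-class expected loss as $\mathrm{Tr}(P_{\mathcal{O}_c}\mathbb{E}[hh^\top \mid y=c])$, splits the second moment into $\Sigma_c + \mu_c\mu_c^\top$, and drops the mean-leakage term $\|P_{\mathcal{O}_c}\mu_c\|_2^2$ under the alignment assumption; your extra bookkeeping (class priors, the constant normalization $1/(D-K)$, and stating that the equivalence holds on the set of encoders satisfying the assumption) makes explicit what the paper absorbs into a proportionality sign.
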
 Proof of this property is provided in Appendix \ref{proof:theorem1}. \vspace{5pt} 

\begin{figure}[t]
\vskip 0.2in
\begin{center}
    \centerline{\includegraphics[width=\columnwidth]{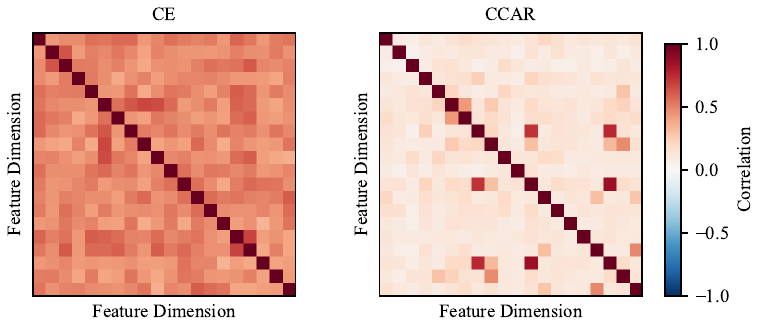}}
    \caption{Dimensional correlation matrix of 20 random features on ImageNet-100. The heatmap visualizes the pairwise cosine similarity between feature activation vectors across the test set. The Cross-Entropy baseline (left) exhibits diffuse off-diagonal correlations, indicating significant entanglement between feature dimensions. In contrast, CCAR (right) effectively suppresses this interference, yielding a sharp block-diagonal topology that maximizes feature orthogonality.}
    \label{fig:correlation_matrix}
\end{center}
\vskip -0.2in
\end{figure}

\begin{remark}
This theorem establishes a direct link to Fisher Discriminant Analysis (FDA) \cite{fisher1936use}. The Fisher Ratio is defined as $J = \frac{\text{Tr}(\Sigma_b)}{\text{Tr}(\Sigma_w)}$, where $\Sigma_b$ is the inter-class scatter and $\Sigma_w$ is the intra-class scatter. By strictly penalizing $\sum_{j \in \mathcal{O}_c} h_j^2$, CCAR forces the diagonal elements of $\Sigma_c$ (which constitute $\Sigma_w$) to zero for all forbidden indices. Crucially, it does not constrain the variance within the active subspace $\mathcal{I}_c$. This selective compression \textbf{minimizes} the denominator of the Fisher Ratio without collapsing the useful intra-class diversity required for fine-grained discrimination. Unlike standard FDA, which seeks a linear projection after feature extraction, CCAR embeds this optimality directly into the encoder's topology.
\end{remark}

\subsection{Perturbation Stability}

We now turn to the robustness implications of this geometry. A key failure mode of deep networks is sensitivity to small, adversarial perturbations that cross decision boundaries \cite{szegedy2013intriguing,goodfellow2014explaining}. We show that CCAR creates a geometric margin against such shifts.

To analyze the stability of the latent manifold, we consider the model in its converged state. We assume the linear classifier $W$ has reached a state of alignment with the partitioned subspaces $\mathcal{P}$, such that each weight vector $w_k$ resides in its designated subspace $\mathcal{I}_k$.

\begin{theorem}[Perturbation Stability Bound]
\label{thm:stability_bound}
Ideally, let a clean feature $h$ lie strictly within the active subspace $\mathcal{I}_y$. For an additive perturbation $\delta$ to induce a misclassification into a target class $k \ne y$, the magnitude of the perturbation projected onto the forbidden subspace $\mathcal{I}_k$ must exceed the effective geometric margin defined by the ratio of the correct class signal to the target class weight sensitivity.
\end{theorem}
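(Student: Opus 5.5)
We work directly with the logits of the perturbed feature $h+\delta$ under the alignment assumption stated just before the theorem: every classifier vector satisfies $w_k \in \mathrm{span}\{e_j : j\in\mathcal{I}_k\}$, and the clean feature satisfies $h = P_{\mathcal{I}_y} h$. Since the index sets $\mathcal{I}_1,\dots,\mathcal{I}_C$ are disjoint, the coordinate projections $P_{\mathcal{I}_c}$ are mutually orthogonal. This gives $w_k^\top v = w_k^\top P_{\mathcal{I}_k} v$ for every vector $v$. The whole argument is a decomposition of the logit gap into block-wise contributions.

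\textbf{Step 1: clean logits.} For $k\neq y$ we have $w_k^\top h = w_k^\top P_{\mathcal{I}_k}P_{\mathcal{I}_y}h = 0$, so the clean logits are $z_y = w_y^\top h$ and $z_k = 0$. We take correct classification of the clean point to mean $s := w_y^\top h > 0$; this quantity is the ``correct class signal''.

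\textbf{Step 2: perturbed logits.} By linearity,
\[
z_y(h+\delta) = s + w_y^\top P_{\mathcal{I}_y}\delta, \qquad z_k(h+\delta) = w_k^\top P_{\mathcal{I}_k}\delta .
\]
Misclassification into $k$ requires $z_k(h+\delta) \ge z_y(h+\delta)$, that is,
\[
w_k^\top P_{\mathcal{I}_k}\delta \ \ge\ s + w_y^\top P_{\mathcal{I}_y}\delta .
\]

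\textbf{Step 3: isolate $P_{\mathcal{I}_k}\delta$.} Cauchy--Schwarz gives $w_k^\top P_{\mathcal{I}_k}\delta \le \|w_k\|_2\,\|P_{\mathcal{I}_k}\delta\|_2$. The remaining term $w_y^\top P_{\mathcal{I}_y}\delta$ is the obstacle, because the adversary could spend energy suppressing the correct signal inside $\mathcal{I}_y$. I see two ways to handle it.

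\emph{Clean form.} Restrict to perturbations with $P_{\mathcal{I}_y}\delta = 0$, i.e.\ purely off-subspace attacks, which is the regime the theorem's wording about the ``forbidden subspace'' suggests. Alternatively, assume $w_y^\top P_{\mathcal{I}_y}\delta \ge 0$, meaning the perturbation does not oppose the class signal. Either way the condition reduces to
\[
\|P_{\mathcal{I}_k}\delta\|_2 \ \ge\ \frac{s}{\|w_k\|_2} = \frac{w_y^\top h}{\|w_k\|_2},
\]
which is exactly the stated margin: the correct-class signal divided by the target-class weight sensitivity.

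\emph{General form.} Keep the in-subspace term and bound it by $-\|w_y\|\,\|P_{\mathcal{I}_y}\delta\|$. This yields the weaker necessary condition
\[
\|w_k\|\,\|P_{\mathcal{I}_k}\delta\| + \|w_y\|\,\|P_{\mathcal{I}_y}\delta\| \ \ge\ s .
\]
It reduces to the margin above when the in-subspace component vanishes.

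I would state the clean version as the theorem, present the general inequality as a remark, and note that any component of $\delta$ lying in blocks $\mathcal{I}_j$ with $j\notin\{y,k\}$ contributes nothing to either logit. This last point formalizes the ``geometric void'' intuition of Figure~\ref{fig:geometric_intuition}.
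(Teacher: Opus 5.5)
Your proof is correct, and its skeleton matches the paper's. The paper also expands $w_k^\top(h+\delta) > w_y^\top(h+\delta)$, uses $w_c\in\mathcal{I}_c$ and $h\in\mathcal{I}_y$ to reduce it to $w_k^\top\delta_{\mathcal{I}_k} > w_y^\top h + w_y^\top\delta_{\mathcal{I}_y}$, and bounds the left side by $\|w_k\|\,\|\delta_{\mathcal{I}_k}\|$.

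The difference is how you handle the in-subspace term. The paper leaves it in the numerator and calls $(w_y^\top h + w_y^\top\delta_{\mathcal{I}_y})/\|w_k\|$ the margin. That bound is exact, but it depends on $\delta$ itself. An adversary acting only inside $\mathcal{I}_y$ (e.g.\ $\delta=-2h$) makes it negative and flips the label with $\delta_{\mathcal{I}_k}=0$. So the literal reading of the statement, margin $=w_y^\top h/\|w_k\|$, holds only under a restriction like the one in your clean form, which you correctly identify.

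Your general form gives a $\delta$-independent necessary condition that is strictly sharper than Corollary~\ref{corr:robustness}. Since $w_k\perp w_y$, Cauchy--Schwarz in $\mathbb{R}^2$ gives $\|w_k\|\,\|\delta_{\mathcal{I}_k}\|+\|w_y\|\,\|\delta_{\mathcal{I}_y}\| \le \|w_k-w_y\|\,\|\delta\|$. This recovers the corollary's requirement $\|\delta\|\ge w_y^\top h/\|w_k-w_y\|$. Your version additionally discounts any energy that $\delta$ places in blocks other than $\mathcal{I}_y$ and $\mathcal{I}_k$.
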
 See Appendix \ref{proof:theorem2} for the proof. \vspace{5pt}

\begin{figure}[t]
\begin{center}
\includegraphics[width=0.35\textwidth]{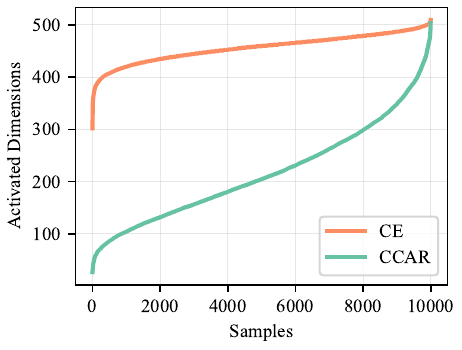}
\caption{The number of activated dimensions of the representations learned by CE and CCAR on CIFAR-100. CCAR (Green) significantly reduces the effective dimensionality, indicating that the model relies on a compact, sparse set of features for prediction compared to the dense activations of the Cross-Entropy baseline.}
\label{fig:activated_dims}
\end{center}
\end{figure}

\begin{corollary}[Certified Robustness]
\label{corr:robustness}
Let the minimum geometric margin be defined as $\tau = \min_{k \neq y} \frac{w_y^\top h}{\|w_y - w_k\|_2}$. If the perturbation norm satisfies $\|\delta\|_2 < \tau$ and clean features are confined to $\mathcal{I}_y$, then the classifier is provably robust to misclassification.
\end{corollary}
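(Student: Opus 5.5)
The plan is to reduce the claim to a margin argument for the linear classifier $z = W h$. The model is in the converged state assumed before Theorem~\ref{thm:stability_bound}: each $w_k$ lies in $\mathcal{I}_k$, and the clean feature $h$ lies in $\mathcal{I}_y$. Since the blocks $\mathcal{I}_k$ are disjoint, $w_k^\top h = 0$ for every $k \neq y$. The clean logit gap is therefore exactly the correct-class signal:
\begin{equation*}
(w_y - w_k)^\top h = w_y^\top h .
\end{equation*}
We take the natural reading that the statement concerns the case $w_y^\top h > 0$, i.e.\ $h$ is correctly classified; otherwise $\tau \le 0$ and the hypothesis $\|\delta\|_2 < \tau$ is vacuous.

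Next, look at the perturbed feature $h + \delta$. A misclassification into class $k$ requires $w_k^\top(h+\delta) \ge w_y^\top(h+\delta)$, which is equivalent to
\begin{equation*}
(w_k - w_y)^\top \delta \ \ge\ w_y^\top h .
\end{equation*}
Apply Cauchy--Schwarz to the left-hand side to get
\begin{equation*}
(w_k - w_y)^\top \delta \le \|w_y - w_k\|_2 \,\|\delta\|_2 .
\end{equation*}
So a flip to class $k$ forces $\|\delta\|_2 \ge w_y^\top h / \|w_y - w_k\|_2$. This is the margin of Theorem~\ref{thm:stability_bound}, where the relevant projection is the component of $\delta$ on $\mathcal{I}_k \cup \mathcal{I}_y$, since $w_y - w_k$ is supported there. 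Taking the minimum over $k \neq y$ gives the threshold $\tau$. Hence if $\|\delta\|_2 < \tau$, then $(w_k - w_y)^\top\delta < w_y^\top h$ strictly for every $k$. Every competing logit therefore stays strictly below $w_y^\top(h+\delta)$, and the argmax is unchanged.

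The main obstacle is interpretive rather than technical. We must specify where $\delta$ acts and what ``provably robust'' certifies. The statement only goes through cleanly if $\delta$ is a perturbation in feature space. For input-space perturbations one would need an additional Lipschitz constant $L$ of $f_\theta$, replacing $\tau$ by $\tau/L$; I would note this as a remark. I would also check that $\|w_y - w_k\|_2 > 0$, which holds since the two vectors have disjoint supports and $w_y \neq 0$ whenever $w_y^\top h > 0$. Finally, I would note that disjoint supports give $\|w_y - w_k\|_2^2 = \|w_y\|_2^2 + \|w_k\|_2^2$. This makes explicit how the block-diagonal structure enters the bound, in contrast to the unstructured case where cross terms $w_k^\top h$ would shrink the numerator.
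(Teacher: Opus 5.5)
Your proof is correct and follows the paper's argument: you use $w_k \perp h$ (from $w_k\in\mathcal{I}_k$ and $h\in\mathcal{I}_y$) to reduce a flip to $(w_k-w_y)^\top\delta \ge w_y^\top h$, bound the left side by Cauchy--Schwarz, and minimize over $k\neq y$, with $\delta$ acting in feature space as in the paper. You are slightly more careful than the paper about ties (using $\ge$), the sign of $w_y^\top h$, and $\|w_y-w_k\|_2>0$; your aside calling this bound ``the margin of Theorem~\ref{thm:stability_bound}'' is loose, since that theorem bounds $\|\delta_{\mathcal{I}_k}\|$ against a different quantity with denominator $\|w_k\|$, but the argument does not depend on it.
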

\begin{proof}
Misclassification into class $k$ occurs if $(w_k - w_y)^\top (h + \delta) > 0$. Since $w_k \perp h$ and $h \in \mathcal{I}_y$, this simplifies to $(w_k - w_y)^\top \delta > w_y^\top h$. By Cauchy-Schwarz, we have $\|\delta\|_2 \|w_k - w_y\|_2 \ge (w_k - w_y)^\top \delta$. Thus, a successful attack requires $\|\delta\|_2 > \frac{w_y^\top h}{\|w_k - w_y\|_2} \ge \tau$. Conversely, $\|\delta\|_2 < \tau$ guarantees stability.
\end{proof}

While the preceding results establish bounds for arbitrary shifts, we further characterize the resilience of the learned geometry to isotropic (directionless) noise. 

\begin{table*}[!t]
\caption{Comparison of Linear Probing Accuracy (\%) on CIFAR-100 and ImageNet-100 under symmetric label noise. CCAR demonstrates superior robustness across all noise levels, maintaining high performance even in extreme noise regimes where baselines degrade. (Best results in bold).}
\label{tab:main_results_noise}
\begin{center}
\begin{small}
\begin{tabular}{cccccc}
\toprule
\multicolumn{6}{c}{\textbf{\textsc{CIFAR-100}}} \\
\midrule
\textsc{Noise (\%)} & \textsc{Cross-Entropy} & \textsc{Center Loss} & \textsc{Contrastive (CL)} & \textsc{NCL} & \textbf{\textsc{CCAR (Ours)}} \\
\midrule
0  & 74.46 $\pm$ 0.24 & 75.39 $\pm$ 0.15 & 75.65 $\pm$ 0.31 & 74.21 $\pm$ 0.12 & \textbf{75.69 $\pm$ 0.18} \\
10 & 72.90 $\pm$ 0.11 & 74.33 $\pm$ 0.29 & 75.26 $\pm$ 0.22 & 73.68 $\pm$ 0.09 & \textbf{75.41 $\pm$ 0.25} \\
20 & 72.10 $\pm$ 0.35 & 74.19 $\pm$ 0.13 & 75.39 $\pm$ 0.16 & 73.28 $\pm$ 0.28 & \textbf{75.26 $\pm$ 0.07} \\
30 & 71.52 $\pm$ 0.18 & 73.75 $\pm$ 0.32 & 75.23 $\pm$ 0.29 & 73.10 $\pm$ 0.15 & \textbf{75.26 $\pm$ 0.33} \\
40 & 70.97 $\pm$ 0.27 & 74.15 $\pm$ 0.08 & 74.94 $\pm$ 0.24 & 73.17 $\pm$ 0.36 & \textbf{75.25 $\pm$ 0.12} \\
50 & 70.11 $\pm$ 0.13 & 73.58 $\pm$ 0.25 & 74.84 $\pm$ 0.10 & 72.62 $\pm$ 0.21 & \textbf{75.15 $\pm$ 0.29} \\
60 & 68.59 $\pm$ 0.42 & 73.23 $\pm$ 0.33 & 74.71 $\pm$ 0.19 & 72.61 $\pm$ 0.14 & \textbf{75.11 $\pm$ 0.22} \\
70 & 66.95 $\pm$ 0.21 & 73.33 $\pm$ 0.17 & 73.94 $\pm$ 0.35 & 72.37 $\pm$ 0.26 & \textbf{75.00 $\pm$ 0.11} \\
80 & 63.37 $\pm$ 0.38 & 72.21 $\pm$ 0.21 & 73.87 $\pm$ 0.13 & 71.26 $\pm$ 0.30 & \textbf{74.67 $\pm$ 0.27} \\
90 & 52.74 $\pm$ 0.25 & 69.70 $\pm$ 0.41 & 72.34 $\pm$ 0.26 & 67.27 $\pm$ 0.19 & \textbf{74.21 $\pm$ 0.34} \\
\midrule
\multicolumn{6}{c}{\textbf{\textsc{ImageNet-100}}} \\
\midrule
0  & 77.64 $\pm$ 0.15 & 78.34 $\pm$ 0.22 & 79.84 $\pm$ 0.09 & 79.14 $\pm$ 0.31 & \textbf{80.62 $\pm$ 0.14} \\
10 & 76.38 $\pm$ 0.29 & 77.66 $\pm$ 0.11 & 79.50 $\pm$ 0.25 & 78.88 $\pm$ 0.17 & \textbf{80.76 $\pm$ 0.21} \\
20 & 76.22 $\pm$ 0.13 & 77.28 $\pm$ 0.34 & 79.52 $\pm$ 0.19 & 78.68 $\pm$ 0.22 & \textbf{80.60 $\pm$ 0.10} \\
30 & 76.08 $\pm$ 0.36 & 77.12 $\pm$ 0.18 & 79.36 $\pm$ 0.27 & 79.02 $\pm$ 0.13 & \textbf{80.54 $\pm$ 0.28} \\
40 & 76.20 $\pm$ 0.22 & 76.86 $\pm$ 0.29 & 79.34 $\pm$ 0.14 & 78.54 $\pm$ 0.35 & \textbf{80.52 $\pm$ 0.08} \\
50 & 75.68 $\pm$ 0.19 & 77.16 $\pm$ 0.12 & 79.56 $\pm$ 0.30 & 78.56 $\pm$ 0.26 & \textbf{80.42 $\pm$ 0.31} \\
60 & 75.42 $\pm$ 0.31 & 76.84 $\pm$ 0.24 & 79.54 $\pm$ 0.16 & 78.64 $\pm$ 0.09 & \textbf{80.42 $\pm$ 0.23} \\
70 & 75.24 $\pm$ 0.14 & 76.92 $\pm$ 0.37 & 79.44 $\pm$ 0.21 & 78.80 $\pm$ 0.18 & \textbf{80.14 $\pm$ 0.12} \\
80 & 73.66 $\pm$ 0.26 & 75.82 $\pm$ 0.15 & 79.28 $\pm$ 0.33 & 78.56 $\pm$ 0.24 & \textbf{80.28 $\pm$ 0.36} \\
90 & 69.08 $\pm$ 0.39 & 74.08 $\pm$ 0.28 & 79.50 $\pm$ 0.12 & 78.26 $\pm$ 0.38 & \textbf{79.98 $\pm$ 0.64} \\
\bottomrule
\end{tabular}
\end{small}
\end{center}
\vskip -0.1in
\end{table*}

\begin{lemma}[Dimensional Noise Filtering]
\label{lem:noise_filtering}
Let $\delta \in \mathbb{R}^D$ be a perturbation vector sampled from an isotropic Gaussian distribution $\mathcal{N}(0, \frac{\sigma^2}{D} I)$. For a class-conditional subspace $\mathcal{I}_k$ of dimension $K = D/C$, the projected noise energy $\|\delta_{\mathcal{I}_k}\|^2$ follows a scaled $\chi^2$ distribution. As $D \to \infty$ with $C$ fixed, provided the geometric margin satisfies $\tau > \frac{\sigma^2}{C}$, the probability that the perturbation bridges the margin vanishes exponentially: 
\begin{equation}
P(\|\delta_{\mathcal{I}_k}\|^2 \ge \tau) \le \exp(-D \cdot \mathcal{A})
\end{equation}
where $\mathcal{A}$ is a strictly positive rate constant depending on $\tau$ and $\sigma^2$.
\end{lemma}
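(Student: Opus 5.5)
The plan is to write the projected energy as a scaled chi-square variable and then apply a Chernoff bound. Since $\delta \sim \mathcal{N}(0, \frac{\sigma^2}{D} I)$, its coordinates are i.i.d. $\mathcal{N}(0,\sigma^2/D)$. The restriction $\delta_{\mathcal{I}_k}$ keeps exactly $K = D/C$ of these coordinates, so
\[
\|\delta_{\mathcal{I}_k}\|^2 = \frac{\sigma^2}{D}\sum_{j\in\mathcal{I}_k} g_j^2, \qquad g_j \stackrel{\text{iid}}{\sim} \mathcal{N}(0,1).
\]
In other words, $\|\delta_{\mathcal{I}_k}\|^2 \sim \frac{\sigma^2}{D}\chi^2_K$. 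This settles the first claim. It also gives the mean, $\mathbb{E}\|\delta_{\mathcal{I}_k}\|^2 = \frac{\sigma^2}{D}\cdot\frac{D}{C} = \frac{\sigma^2}{C}$, which does not depend on $D$. The hypothesis $\tau > \sigma^2/C$ therefore says that the threshold lies strictly above the mean, so the event in question is a large-deviation event.

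For the tail I will use the Chernoff/Cramér bound for $\chi^2_K$. The moment generating function is $\mathbb{E}e^{t g^2} = (1-2t)^{-1/2}$ for $t<1/2$, so Markov's inequality gives, for every $t\in(0,1/2)$,
\[
P\Big(\sum_{j} g_j^2 \ge K x\Big) \le \exp\!\Big(-K\big[tx + \tfrac12\log(1-2t)\big]\Big).
\]
Here the threshold in standardized units is $x = \tau D/(\sigma^2 K) = \tau C/\sigma^2 > 1$. Optimizing at $t = \frac12(1 - 1/x)$ yields the exponent $\frac{K}{2}(x - 1 - \log x)$. Substituting $K = D/C$ gives
\[
P(\|\delta_{\mathcal{I}_k}\|^2 \ge \tau) \le \exp(-D\,\mathcal{A}), \qquad \mathcal{A} = \frac{1}{2C}\Big(\frac{\tau C}{\sigma^2} - 1 - \log\frac{\tau C}{\sigma^2}\Big).
\]
Since $u - 1 - \log u > 0$ for every $u > 1$, the constant $\mathcal{A}$ is strictly positive whenever $\tau > \sigma^2/C$. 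It depends only on $\tau$, $\sigma^2$ and the fixed $C$, so the decay is exponential as $D \to \infty$. I could use the Laurent--Massart inequality instead, but it gives a less explicit rate, so I would keep the direct Cramér computation.

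There is no real obstacle beyond bookkeeping, but two points need care. The first is the scaling: the per-coordinate variance $\sigma^2/D$ and the block size $D/C$ must cancel so that the mean stays at $\sigma^2/C$, which is exactly why the margin condition is stated in that form. The second is divisibility. When $C \nmid D$ we have $K = \lfloor D/C \rfloor$, and the exponent becomes $K\cdot(\cdot)$ with $K \ge D/C - 1$. This still gives $\exp(-D\mathcal{A}')$ for large $D$, with a slightly smaller positive constant $\mathcal{A}'$ (for instance $\mathcal{A}/2$), so I would remark on this rather than assume exact divisibility.
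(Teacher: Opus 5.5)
Your proposal is correct and follows the paper's proof essentially step for step. The paper likewise writes $\|\delta_{\mathcal{I}_k}\|^2 = \frac{\sigma^2}{D}Q_K$ with $Q_K \sim \chi^2_K$, applies a Chernoff bound using the $\chi^2$ moment generating function, optimizes at $t^* = \frac{1}{2}\bigl(1 - \frac{\sigma^2}{C\tau}\bigr)$ (your $t = \frac12(1-1/x)$), and reaches the same rate $\mathcal{A} = \frac{1}{2C}\bigl(\frac{C\tau}{\sigma^2} - 1 - \ln\frac{C\tau}{\sigma^2}\bigr)$; your remark on the case $C \nmid D$ is a small addition the paper does not make, since it assumes $K = D/C$ exactly.
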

Proof of this concentration property is provided in Appendix \ref{proof:lemma1}.


\textbf{Geometric Interpretation.} In standard training, decision boundaries are often close to the data manifold in arbitrary directions \cite{fawzi2018empirical}. Theorem 4.3 implies that CCAR essentially concentrates the class manifold within $\mathcal{I}_y$, pushing the decision boundaries for all other classes $k$ far away in the orthogonal directions $\mathcal{I}_k$. To flip a prediction, an adversary cannot simply perturb the input slightly; they must inject significant energy specifically into the subspace $\mathcal{I}_k$. As the dimension $D$ increases, the probability of random noise aligning with a specific forbidden subspace $\mathcal{I}_k$ diminishes, providing inherent robustness to isotropic noise \cite{fawzi2016robustness}.

\section{Empirical Analysis}

In this section, leveraging the inductive biases of Class-Conditional Activation Regularization (CCAR), we evaluate the efficacy of our method across three distinct axes: robustness to supervision corruption, geometric mechanism verification, and emergent input stability. We demonstrate that while standard Cross-Entropy (CE) training is agnostic to latent topology, CCAR successfully imposes a stable manifold structure that persists even under extreme noise regimes. Detailed hyperparameter configurations and implementation specifics are provided in Appendix \ref{app:experiment_setup}.

\subsection{Robustness to Supervision Corruption}

Standard supervised learning predominantly optimizes for empirical risk, a process that often leads the encoder to prioritize the memorization of stochastic associations when labels are corrupted \cite{zhang2017understanding}. We posit that CCAR provides a geometric safeguard against such overfitting by confining class-specific information to disjoint orthogonal subspaces. To evaluate this, we conduct experiments on CIFAR-100 and ImageNet-100 under symmetric label noise ranging from 0\% to 90\%.

\textbf{Baselines.} On both CIFAR-100 \cite{krizhevsky2009learning}. and ImageNet-100 \cite{deng2009imagenet}, we benchmark against the standard Cross-Entropy (CE) baseline and extend the evaluation to include strong geometric and contrastive baselines: Center Loss \cite{wen2016discriminative}, which minimizes intra-class variance; SimCLR \cite{chen2020simple}, representing self-supervised contrastive learning; and Non-negative Contrastive Learning (NCL) \cite{wang2024non}, which enforces sparsity. Implementation details for these auxiliary objectives are provided in Appendix \ref{app:baseline_implementation}.

\begin{figure*}[t]
\begin{center}
    \begin{minipage}[b]{0.32\textwidth}
        \centering
        \includegraphics[width=\linewidth]{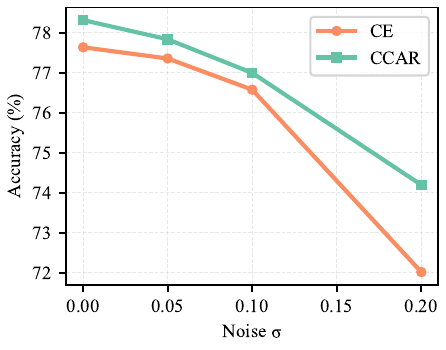}
        \centerline{\small (a) Isotropic Gaussian Noise}
    \end{minipage}
    \hfill
    \begin{minipage}[b]{0.32\textwidth}
        \centering
        \includegraphics[width=\linewidth]{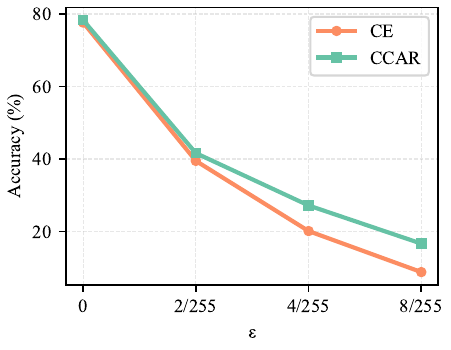}
        \centerline{\small (b) Single-Step FGSM Attack}
    \end{minipage}
    \hfill
    \begin{minipage}[b]{0.32\textwidth}
        \centering
        \includegraphics[width=\linewidth]{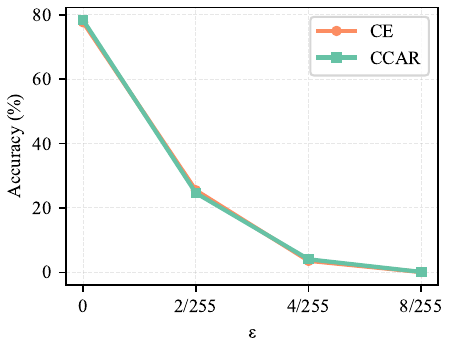}
        \centerline{\small (c) Iterative PGD-20 Attack}
    \end{minipage}
    
    \caption{Robustness analysis on ImageNet-100. We compare the classification accuracy of CCAR against the CE baseline under \textbf{(a)} Isotropic Gaussian Noise, \textbf{(b)} Single-step FGSM perturbations, and \textbf{(c)} Iterative PGD-20 attacks.}
    \label{fig:stability_analysis}
\end{center}
\vskip -0.1in
\end{figure*}

\textbf{Results.} As visualized in Figure \ref{fig:geometric_metrics}(a) and summarized in Table \ref{tab:main_results_noise}, CCAR demonstrates superior stability across critical noise regimes. On clean data (0\% noise), CCAR achieves comparable accuracy to the baselines, indicating that the subspace constraints ($K=D/C$) do not inhibit representational capacity. However, as the noise rate increases, the divergence becomes pronounced.

On CIFAR-100, while Center Loss and Contrastive methods offer marginal improvements over CE, they still degrade significantly at high noise levels. In contrast, CCAR maintains robust discriminative power, outperforming the closest baseline by a significant margin at 80\% noise. This trend holds on ImageNet-100, where the unregularized baseline collapses under label memorization, while CCAR effectively filters the high-energy, incoherent signals required to fit random labels. A comprehensive analysis of the training dynamics and further noise intervals is provided in Appendix \ref{g4}.

\subsection{Geometric Mechanism Verification}

We proceed to verify whether the observed robustness arises from the geometric structural changes predicted by our theory. Specifically, we analyze the latent topology through three complementary lenses: semantic consistency, feature sparsity, and subspace partitioning. Detailed definitions and implementation specifics for these metrics are provided in Appendix \ref{g2}.

\textbf{Disentanglement and Sparsity.} As illustrated in Figure \ref{fig:geometric_metrics}, CCAR induces a fundamental shift in feature organization compared to the Cross-Entropy (CE) baseline. Figure \ref{fig:correlation_matrix} reveals that while CE features exhibit a diffuse correlation structure (indicating significant inter-class entanglement), CCAR features form a sharp block-diagonal topology. This visual disentanglement is quantified in Figure 2(c): CCAR achieves substantially higher feature sparsity per sample (reaching $\approx 58\%$ on CIFAR-100 and $\approx 28\%$ on ImageNet-100), whereas the baseline exhibits significantly lower sparsity ($\approx 11\%$ and $<2\%$, respectively). Crucially, Figure \ref{fig:geometric_metrics}(b) confirms that this sparsity is not destructive; the Class Consistency Rate remains significantly higher for CCAR, implying that the few active neurons are highly semantically relevant to the input class.

\textbf{Emergent Subspace Partitioning.} To confirm that this sparsity corresponds to the specific subspace constraints defined in Theorem~\ref{thm:stability_bound}, we analyze the distribution of activated dimensions across the validation set. This confirms that the encoder has successfully partitioned the latent space $\mathbb{R}^D$ into disjoint active subspaces $\mathcal{I}_y$, effectively creating the geometric margins against perturbation predicted by Theorem~\ref{thm:stability_bound}.

\textbf{Results.} Figure \ref{fig:activated_dims} provides the most direct empirical evidence of the geometric mechanism. The baseline model (Orange) utilizes nearly the entire feature dimension ($D=512$) for every sample, indicative of a diffuse but entangled representation where noise and signal are inextricably mixed. In stark contrast, CCAR (Green) exhibits a distinct, block-sparse sparsity profile, where samples consistently activate only a fraction of the network capacity. This confirms that the encoder has successfully partitioned the latent space $\mathbb{R}^D$ into disjoint active subspaces $\mathcal{I}_y$, effectively creating the geometric margins against perturbation predicted by Theorem \ref{thm:stability_bound}.

\subsection{Emergent Input Stability and Retrieval}
\label{sec:input_stability}

Finally, we evaluate whether the geometric margin established in Theorem~\ref{thm:stability_bound} and the statistical filtering properties of Lemma~\ref{lem:noise_filtering} extend robustness to unseen input perturbations. We subject models to three categories of input corruption: isotropic Gaussian noise ($\sigma \in \{0.05, 0.1, 0.2\}$), single-step adversarial perturbations (FGSM) (Goodfellow et al., 2014) at $\epsilon \in \{2/255, 4/255, 8/255\}$, and iterative adversarial attacks (PGD-20) (Madry et al., 2018) with step size $\alpha = \epsilon/4$.

\begin{figure}[!t]
\begin{center}
    \includegraphics[width=0.7\columnwidth]{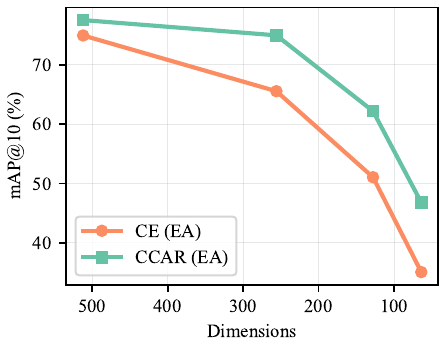}
    \caption{Image retrieval with CE and CCAR features on ImageNet-100. CCAR achieves a significantly higher mAP@10 score, indicating that geometric disentanglement improves semantic consistency alongside robustness.}
    \label{fig:retrieval_map}
\end{center}
\vskip -0.1in
\end{figure}

\textbf{Stability Analysis.} Figure \ref{fig:stability_analysis} illustrates the performance under these corruptions. In the case of stochastic Gaussian noise (Figure \ref{fig:stability_analysis}(a)) and single-step FGSM attacks (Figure \ref{fig:stability_analysis}(b)), CCAR demonstrates a distinct stability advantage, consistently outperforming the Cross-Entropy baseline. This result empirically supports Theorem \ref{thm:stability_bound}: minimizing activation energy in forbidden directions naturally increases the perturbation budget required to push a sample into a competing class subspace. However, under the iterative PGD-20 attack (Figure \ref{fig:stability_analysis}(c)), CCAR performs comparably to the baseline. This suggests that while the induced geometry
provides inherent resilience against single-step and stochastic perturbations, it does not fundamentally alter the local curvature in a manner that resists iterative optimization. 

\textbf{Semantic Retrieval Quality.} A common trade-off in robust learning is that increased stability often comes at the cost of feature discriminability (clean accuracy). We investigate this by evaluating the semantic consistency of the learned representations through image retrieval (mAP@10). As shown in Figure \ref{fig:retrieval_map}, CCAR breaks this trade-off, achieving a significantly higher mAP score compared to the baseline. This confirms that the geometric constraints do not rigidify the representation; rather, by explicitly decoupling class subspaces, CCAR enhances the metric quality of the features, making them both more robust and semantically richer. Qualitative visualizations of class-specific neurons are provided in Appendix \ref{app:visual_interpretability}.

\section{Conclusion}

Despite the predictive success of standard supervised learning, the resulting latent representations often lack the structural organization necessary for intrinsic robustness. Inspired by the principles of geometric disentanglement, we proposed Class-Conditional Activation Regularization (CCAR) to explicitly impose a block-diagonal covariance structure on latent features. With minimal computational overhead and no additional data requirements, CCAR-L2 effectively confines class-specific information to orthogonal subspaces, significantly enhancing robustness against both supervision corruption and unseen input perturbations.

Our work provided a comprehensive theoretical characterization of this mechanism, establishing formal links between forbidden energy minimization and the maximization of the Fisher Discriminant Ratio. Empirically, we demonstrated that this geometric organization enables models to maintain high discriminative power even under extreme label noise regimes where standard Empirical Risk Minimization collapses. Ultimately, we believe that explicitly shaping latent topology serves as a powerful inductive bias, offering a promising alternative to purely data-centric robustness strategies in the pursuit of more reliable deep learning methodologies.

\section*{Impact Statement}

Standard supervised learning prioritizes the final prediction error but often neglects the structural integrity of the learned features. We propose Class-Conditional Activation Regularization to correct this imbalance by imposing a soft inductive bias that actively shapes the internal representation. This geometric foundation ensures that in times of adversity, such as exposure to high label noise or input perturbations, the model maintains a coherent and robust scaffold that prevents collapse \cite{geirhos2020shortcut}. By achieving this stability without the heavy computational cost of adversarial training \cite{madry2018towards}, our approach offers a sustainable path toward reliable AI. Furthermore, this enforced disentanglement aligns latent dimensions with semantic meaning, opening new doors for mechanistic interpretability and transparent auditing of decision-making processes \cite{elhage2022toy}. There are many potential societal consequences of our work, none which we feel must be specifically highlighted here.

\bibliography{example_paper}

@inproceedings{alain2016understanding,
  title={Understanding intermediate layers using linear classifier probes},
  author={Alain, Guillaume and Bengio, Yoshua},
  booktitle={International Conference on Learning Representations},
  year={2016}
}

@inproceedings{bau2017network,
  title={Network dissection: Quantifying interpretability of deep visual representations},
  author={Bau, David and Zhou, Bolei and Khosla, Aditya and Oliva, Aude and Torralba, Antonio},
  booktitle={Proceedings of the IEEE Conference on Computer Vision and Pattern Recognition},
  pages={6541--6549},
  year={2017}
}

@article{bengio2013representation,
  title={Representation learning: A review and new perspectives},
  author={Bengio, Yoshua and Courville, Aaron and Vincent, Pascal},
  journal={IEEE Transactions on Pattern Analysis and Machine Intelligence},
  volume={35},
  number={8},
  pages={1798--1828},
  year={2013},
  publisher={IEEE}
}

@book{bertsekas1999nonlinear,
  title={Nonlinear programming},
  author={Bertsekas, Dimitri P},
  year={1999},
  publisher={Athena scientific}
}

@inproceedings{chen2020simple,
  title={A simple framework for contrastive learning of visual representations},
  author={Chen, Ting and Kornblith, Simon and Norouzi, Mohammad and Hinton, Geoffrey},
  booktitle={International Conference on Machine Learning},
  pages={1597--1607},
  year={2020},
  organization={PMLR}
}

@inproceedings{cubuk2019autoaugment,
  title={AutoAugment: Learning augmentation strategies from data},
  author={Cubuk, Ekin D and Zoph, Barret and Mane, Dandelion and Vasudevan, Vijay and Le, Quoc V},
  booktitle={Proceedings of the IEEE/CVF Conference on Computer Vision and Pattern Recognition},
  pages={113--123},
  year={2019}
}

@inproceedings{deng2019arcface,
  title={ArcFace: Additive angular margin loss for deep face recognition},
  author={Deng, Jiankang and Guo, Jia and Xue, Niannan and Zafeiriou, Stefanos},
  booktitle={Proceedings of the IEEE/CVF Conference on Computer Vision and Pattern Recognition},
  pages={4690--4699},
  year={2019}
}

@inproceedings{deng2009imagenet,
  title={ImageNet: A large-scale hierarchical image database},
  author={Deng, Jia and Dong, Wei and Socher, Richard and Li, Li-Jia and Li, Kai and Fei-Fei, Li},
  booktitle={Proceedings of the IEEE Conference on Computer Vision and Pattern Recognition},
  pages={248--255},
  year={2009},
  organization={IEEE}
}

@article{elhage2022toy,
  title={Toy models of superposition},
  author={Elhage, Nelson and Hume, Tristan and Olsson, Catherine and Schiefer, Nicholas and Henighan, Tom and Kravec, Shauna and others},
  journal={Transformer Circuits Thread},
  year={2022},
  url={https://transformer-circuits.pub/2022/toy_model/index.html}
}

@inproceedings{fawzi2016robustness,
  title={Robustness of classifiers: from adversarial to random noise},
  author={Fawzi, Alhussein and Moosavi-Dezfooli, Seyed-Mohsen and Frossard, Pascal},
  booktitle={Advances in Neural Information Processing Systems},
  volume={29},
  year={2016}
}

@inproceedings{fawzi2018empirical,
  title={Empirical study of the topology and geometry of deep networks},
  author={Fawzi, Alhussein and Moosavi-Dezfooli, Seyed-Mohsen and Frossard, Pascal and Soatto, Stefano},
  booktitle={Proceedings of the IEEE Conference on Computer Vision and Pattern Recognition},
  pages={3762--3770},
  year={2018}
}

@article{fisher1936use,
  title={The use of multiple measurements in taxonomic problems},
  author={Fisher, Ronald A},
  journal={Annals of Eugenics},
  volume={7},
  number={2},
  pages={179--188},
  year={1936},
  publisher={Wiley Online Library}
}

@article{geirhos2020shortcut,
  title={Shortcut learning in deep neural networks},
  author={Geirhos, Robert and Jacobsen, J{\"o}rn-Henrik and Michaelis, Claudio and Zemel, Richard and Brendel, Wieland and Bethge, Matthias and Wichmann, Felix A},
  journal={Nature Machine Intelligence},
  volume={2},
  number={11},
  pages={665--673},
  year={2020},
  publisher={Nature Publishing Group}
}

@inproceedings{ghosh2017robust,
  title={Robust loss functions under label noise for deep neural networks},
  author={Ghosh, Aritra and Kumar, Himanshu and Sastry, P Shanti},
  booktitle={AAAI Conference on Artificial Intelligence},
  volume={31},
  number={1},
  year={2017}
}

@inproceedings{goodfellow2014explaining,
  title={Explaining and harnessing adversarial examples},
  author={Goodfellow, Ian J and Shlens, Jonathon and Szegedy, Christian},
  booktitle={International Conference on Learning Representations},
  year={2014}
}

@inproceedings{he2016deep,
  title={Deep residual learning for image recognition},
  author={He, Kaiming and Zhang, Xiangyu and Ren, Shaoqing and Sun, Jian},
  booktitle={Proceedings of the IEEE Conference on Computer Vision and Pattern Recognition},
  pages={770--778},
  year={2016}
}

@inproceedings{higgins2017beta,
  title={beta-VAE: Learning Basic Visual Concepts with a Constrained Variational Framework},
  author={Higgins, Irina and Matthey, Loic and Pal, Arka and Burgess, Christopher and Glorot, Xavier and Botvinick, Matthew and Mohamed, Shakir and Lerchner, Alexander},
  booktitle={International Conference on Learning Representations},
  year={2017}
}

@inproceedings{ilyas2019adversarial,
  title={Adversarial examples are not bugs, they are features},
  author={Ilyas, Andrew and Santurkar, Shibani and Tsipras, Dimitris and Engstrom, Logan and Tran, Brandon and Madry, Aleksander},
  booktitle={Advances in Neural Information Processing Systems},
  volume={32},
  year={2019}
}

@inproceedings{khosla2020supervised,
  title={Supervised contrastive learning},
  author={Khosla, Prannay and Teterwak, Piotr and Wang, Chen and Sarna, Aaron and Tian, Yonglong and Isola, Phillip and Maschinot, Aaron and Liu, Ce and Krishnan, Dilip},
  booktitle={Advances in Neural Information Processing Systems},
  volume={33},
  pages={18661--18673},
  year={2020}
}

@inproceedings{li2020dividemix,
  title={DivideMix: Learning with noisy labels as semi-supervised learning},
  author={Li, Junnan and Socher, Richard and Hoi, Steven CH},
  booktitle={International Conference on Learning Representations},
  year={2020}
}

@inproceedings{loshchilov2017sgdr,
  title={SGDR: Stochastic Gradient Descent with Warm Restarts},
  author={Loshchilov, Ilya and Hutter, Frank},
  booktitle={International Conference on Learning Representations},
  year={2017}
}

@article{papyan2020prevalence,
  title={Prevalence of neural collapse during the terminal phase of deep learning training},
  author={Papyan, Vardan and Han, X Y and Donoho, David L},
  journal={Proceedings of the National Academy of Sciences},
  volume={117},
  number={40},
  pages={24652--24663},
  year={2020},
  publisher={National Acad Sciences}
}

@inproceedings{szegedy2013intriguing,
  title={Intriguing properties of neural networks},
  author={Szegedy, Christian and Zaremba, Wojciech and Sutskever, Ilya and Bruna, Joan and Erhan, Dumitru and Goodfellow, Ian and Fergus, Rob},
  booktitle={International Conference on Learning Representations},
  year={2014}
}

@article{tibshirani1996regression,
  title={Regression shrinkage and selection via the lasso},
  author={Tibshirani, Robert},
  journal={Journal of the Royal Statistical Society: Series B (Methodological)},
  volume={58},
  number={1},
  pages={267--288},
  year={1996},
  publisher={Wiley Online Library}
}

@inproceedings{wang2018cosface,
  title={CosFace: Large margin cosine loss for deep face recognition},
  author={Wang, Hao and Wang, Yitong and Zhou, Zheng and Ji, Xing and Gong, Dihong and Zhou, Jing and Li, Zhifeng and Liu, Wei},
  booktitle={Proceedings of the IEEE Conference on Computer Vision and Pattern Recognition},
  pages={5265--5274},
  year={2018}
}

@inproceedings{wang2024non,
  title={Non-negative contrastive learning},
  author={Wang, Yifei and Zhang, Qi and Guo, Yisen and Wang, Yisen},
  booktitle={International Conference on Learning Representations},
  year={2024}
}

@inproceedings{wen2016discriminative,
  title={A discriminative feature learning approach for deep face recognition},
  author={Wen, Yandong and Zhang, Kaipeng and Li, Zhifeng and Qiao, Yu},
  booktitle={European Conference on Computer Vision},
  pages={499--515},
  year={2016},
  organization={Springer}
}

@inproceedings{zeiler2014visualizing,
  title={Visualizing and understanding convolutional networks},
  author={Zeiler, Matthew D and Fergus, Rob},
  booktitle={European Conference on Computer Vision},
  pages={818--833},
  year={2014},
  organization={Springer}
}

@inproceedings{zhang2017understanding,
  title={Understanding deep learning requires rethinking generalization},
  author={Zhang, Chiyuan and Bengio, Samy and Hardt, Moritz and Recht, Benjamin and Vinyals, Oriol},
  booktitle={International Conference on Learning Representations},
  year={2017}
}

@inproceedings{jing2020implicit,
  title={Implicit Rank-Minimizing Autoencoder},
  author={Jing, Li andzb, Jure and LeCun, Yann},
  booktitle={NeurIPS},
  year={2020}
}

@inproceedings{madry2018towards,
  title={Towards Deep Learning Models Resistant to Adversarial Attacks},
  author={Madry, Aleksander and Makelov, Aleksandar and Schmidt, Ludwig and Tsipras, Dimitris and Vladu, Adrian},
  booktitle={International Conference on Learning Representations},
  year={2018},
  url={https://openreview.net/forum?id=rJzIBfZAb}
}

@inproceedings{nair2010rectified,
  title={Rectified Linear Units Improve Restricted Boltzmann Machines},
  author={Nair, Vinod and Hinton, Geoffrey E.},
  booktitle={Proceedings of the 27th International Conference on Machine Learning (ICML)},
  pages={807--814},
  year={2010}
}

@inproceedings{geirhos2019imagenet,
  title={ImageNet-trained CNNs are biased towards texture; increasing shape bias improves accuracy and robustness},
  author={Geirhos, Robert and Rubisch, Patricia and Michaelis, Claudio and Bethge, Matthias and Wichmann, Felix A and Brendel, Wieland},
  booktitle={International Conference on Learning Representations},
  year={2019}
}

@inproceedings{locatello2019challenging,
  title={Challenging Common Assumptions in the Unsupervised Learning of Disentangled Representations},
  author={Locatello, Francesco and Bauer, Stefan and Lucic, Mario and Raetsch, Gunnar and Gelly, Sylvain and Sch{\"o}lkopf, Bernhard and Bachem, Olivier},
  booktitle={International Conference on Machine Learning},
  year={2019}
}

@inproceedings{lezama2018ole,
  title={OL{\'e}: Orthogonal Low-rank Embedding for Image Recognition},
  author={Lezama, Jos{\'e} and Qiu, Qiang and Sapiro, Guillermo},
  booktitle={Proceedings of the IEEE Conference on Computer Vision and Pattern Recognition},
  year={2018}
}

@article{olah2020zoom,
  title={Zoom In: An Introduction to Circuits},
  author={Olah, Chris and Cammarata, Nick and Schubert, Ludwig and Goh, Gabriel and Petrov, Michael and Carter, Shan},
  journal={Distill},
  year={2020}
}

@techreport{krizhevsky2009learning,
  title={Learning multiple layers of features from tiny images},
  author={Krizhevsky, Alex and Hinton, Geoffrey and others},
  year={2009},
  institution={University of Toronto}
}
\bibliographystyle{icml2026}

\newpage
\appendix
\onecolumn
\section{Extended Related Work}
\label{app:related_work}

While Section \ref{sec:background} outlines the primary motivations for Class-Conditional Geometric Regularization (CCAR), this appendix provides a more granular discussion of its relationship to adjacent fields, specifically Metric Learning, Neural Collapse, and Disentanglement.

\subsection{Relationship to Metric Learning and Margin Losses}
Metric learning aims to learn a distance function such that similar samples are close and dissimilar ones are far. Seminal works like Center Loss \cite{wen2016discriminative} penalize the Euclidean distance between deep features and their class centroids, minimizing intra-class variance. ArcFace \cite{deng2019arcface} and CosFace \cite{wang2018cosface} enforce angular margins on the hypersphere to maximize inter-class separability.

\textbf{Contrast with CCAR.} While these methods effectively structure the feature space, they operate on the relative metric between points (Euclidean or Angular distance) rather than the absolute coordinate representation. For example, ArcFace allows class clusters to rotate arbitrarily as long as the angular margin is preserved. CCAR is fundamentally different as it enforces an axis-aligned constraint. By forcing class features into specific subspaces (sets of basis vectors), CCAR essentially constrains the rotation of the latent space, ensuring that specific neurons correspond to specific class manifolds. This coordinate-level constraint is what enables the emergent robustness to noise, as noise in forbidden dimensions is explicitly suppressed, whereas metric learning methods permit noise as long as the angular margin remains sufficient.

\subsection{Relationship to Neural Collapse}
Recent theoretical work has identified a phenomenon known as Neural Collapse \cite{papyan2020prevalence}, which occurs during the terminal phase of training with Cross-Entropy loss. Neural Collapse is characterized by (i) intra-class variation vanishing (features collapse to the class mean), and (ii) class means forming an optimal Simplex Equiangular Tight Frame (ETF).

\textbf{Contrast with CCAR.} While Neural Collapse describes the natural tendency of unregularized networks to compress features, it typically leads to a Simplex geometry where all dimensions are coupled. CCAR actively intervenes in this process to enforce a Block-Diagonal geometry instead of a Simplex ETF. By explicitly penalizing off-diagonal energy (via Theorem \ref{thm:trace_minimization}), CCAR prevents the distribution of information across all dimensions, preserving a structured sparsity that the natural Neural Collapse process does not guarantee. This structured collapse is crucial for robustness, as it leaves margin regions of near-zero energy that absorb adversarial perturbations.

\subsection{Relationship to Disentangled Representation Learning}
Disentanglement learning, often explored via VAEs (e.g., $\beta$-VAE) \cite{higgins2017beta}, aims to separate distinct generative factors of variation (e.g., pose, color) into independent latent dimensions.

\textbf{Contrast with CCAR.} Standard disentanglement is typically unsupervised and seeks to separate latent attributes. CCAR can be viewed as a form of Supervised Disentanglement, where the factor of variation being disentangled is the class identity itself. This addresses the theoretical impossibility of unsupervised disentanglement without inductive biases \citep{locatello2019challenging}. Unlike VAEs, which rely on reconstruction objectives and KL-divergence penalties, CCAR achieves this separation discriminatively through the subspace masking mechanism. This avoids the trade-off between reconstruction quality and disentanglement often seen in generative models, allowing CCAR to maintain high classification accuracy while ensuring that class identity is perfectly disentangled from other nuisance variations in the forbidden subspaces.

\subsection{Relationship to Adversarial Training}
Adversarial Training \cite{madry2018towards} is the gold standard for robustness, formulating training as a min-max game against a worst-case perturbation.

\textbf{Contrast with CCAR.} Adversarial Training is computationally expensive (requiring iterative gradient steps to generate attacks) and often overfits to the specific perturbation model (e.g., $L_\infty$ ball). CCAR offers an efficient alternative: rather than simulating attacks, it constructs a geometry of resistance. By minimizing the dimensionality of the manifold occupied by each class, CCAR statistically reduces the probability that a random or directed perturbation will align with the target class subspace. While Adversarial Training builds a wall at the specific boundary, CCAR moves the data manifold away from the decision boundaries entirely, providing a more general form of stability at negligible computational cost.

\section{Proofs}
\label{app:proofs}

\subsection{Proof of Theorem \ref{thm:trace_minimization}}
\label{proof:theorem1}

\begin{proof}
Let $P_{\mathcal{O}_c} = \text{diag}(M(c))$ be the orthogonal projection matrix onto the forbidden subspace indices for class $c$. The CCAR-L2 objective for a feature vector $h$ is defined as $\mathcal{L}(h, c) \propto \| P_{\mathcal{O}_c} h \|_2^2$. We expand the expected loss over the class-conditional distribution:

\begin{align}
\mathbb{E}_{h|c} [\mathcal{L}(h, c)] 
&= \mathbb{E}_{h|c} \left[ (P_{\mathcal{O}_c} h)^\top (P_{\mathcal{O}_c} h) \right] \\
&= \mathbb{E}_{h|c} \left[ h^\top P_{\mathcal{O}_c}^\top P_{\mathcal{O}_c} h \right] \\
&= \mathbb{E}_{h|c} \left[ h^\top P_{\mathcal{O}_c} h \right] \quad \text{(Idempotence: } P^\top P = P \text{)} \\
&= \mathbb{E}_{h|c} \left[ \text{Tr}( h^\top P_{\mathcal{O}_c} h ) \right] \quad \text{(Scalar trace identity)} \\
&= \mathbb{E}_{h|c} \left[ \text{Tr}( P_{\mathcal{O}_c} h h^\top ) \right] \quad \text{(Cyclic property)} \\
&= \text{Tr}\left( P_{\mathcal{O}_c} \mathbb{E}_{h|c}[ h h^\top ] \right) \quad \text{(Linearity of Expectation)}
\end{align}

We invoke the definition of the second moment matrix $\mathbb{E}[XX^\top] = \text{Cov}(X) + \mathbb{E}[X]\mathbb{E}[X]^\top$. Substituting the class covariance $\Sigma_c$ and class mean $\mu_c$:

\begin{align}
\text{Tr}\left( P_{\mathcal{O}_c} (\Sigma_c + \mu_c \mu_c^\top) \right) 
&= \text{Tr}( P_{\mathcal{O}_c} \Sigma_c ) + \text{Tr}( P_{\mathcal{O}_c} \mu_c \mu_c^\top ) \\
&= \text{Tr}( P_{\mathcal{O}_c} \Sigma_c ) + \text{Tr}( \mu_c^\top P_{\mathcal{O}_c} \mu_c ) \\
&= \text{Tr}( P_{\mathcal{O}_c} \Sigma_c ) + \| P_{\mathcal{O}_c} \mu_c \|_2^2
\end{align}

The objective decouples into two terms: (1) the trace of the projected covariance and (2) the projected energy of the mean.
\begin{equation}
\mathbb{E}[\mathcal{L}] \propto \text{Tr}( P_{\mathcal{O}_c} \Sigma_c ) + \| P_{\mathcal{O}_c} \mu_c \|_2^2
\end{equation}
Under the alignment assumption stated in Theorem \ref{thm:trace_minimization} (where $\|P_{\mathcal{O}_c}\mu_c\|^2 \to 0$), the second term vanishes. Thus, minimizing the loss becomes equivalent to minimizing the covariance trace:
\begin{equation}
\min \mathbb{E}_{h|c} [\mathcal{L}(h, c)] \iff \min \text{Tr}( P_{\mathcal{O}_c} \Sigma_c ).
\end{equation}
\end{proof}

\subsection{Proof of Theorem \ref{thm:stability_bound}}
\label{proof:theorem2}

\begin{proof}
Let $w_y$ and $w_k$ be the weight vectors for the correct class $y$ and target class $k$. We analyze the system at the converged state where subspace alignment holds ($w_c \in \mathcal{I}_c$). 

Justification for Alignment: Since the CCAR encoder suppresses feature activity outside $\mathcal{I}_c$, any weight component orthogonal to $\mathcal{I}_c$ contributes no discriminative signal (i.e., gradient from Cross-Entropy is zero) but is still penalized by standard $L_2$ weight decay. Consequently, optimization drives these orthogonal weight components to zero, ensuring $w_c$ aligns with $\mathcal{I}_c$.

Assuming this alignment and ideal feature placement $h \in \mathcal{I}_y$ (thus $w_k^\top h = 0$), a misclassification occurs under perturbation $\delta$ if:
\begin{equation}
w_k^\top (h + \delta) > w_y^\top (h + \delta)
\end{equation}
Expanding the inner products:
\begin{align}
w_k^\top h + w_k^\top \delta &> w_y^\top h + w_y^\top \delta \\
0 + w_k^\top \delta &> w_y^\top h + w_y^\top \delta \quad \text{(Orthogonality)}
\end{align}
We decompose $\delta$ relative to the subspace partition $\mathcal{P}$:
\begin{equation}
\delta = \delta_{\mathcal{I}_k} + \delta_{\mathcal{I}_y} + \sum_{j \neq k, y} \delta_{\mathcal{I}_j}
\end{equation}
Substituting this decomposition into the weight-perturbation dot products:
\begin{align}
w_k^\top \delta &= w_k^\top \delta_{\mathcal{I}_k} + w_k^\top \delta_{\mathcal{I}_y} + \dots = w_k^\top \delta_{\mathcal{I}_k} \quad \text{(since } w_k \perp \mathcal{I}_{j \neq k} \text{)} \\
w_y^\top \delta &= w_y^\top \delta_{\mathcal{I}_y} + w_y^\top \delta_{\mathcal{I}_k} + \dots = w_y^\top \delta_{\mathcal{I}_y}
\end{align}
The inequality simplifies to:
\begin{equation}
w_k^\top \delta_{\mathcal{I}_k} > w_y^\top h + w_y^\top \delta_{\mathcal{I}_y}
\end{equation}
Applying the Cauchy-Schwarz inequality ($|u^\top v| \le \|u\| \|v\|$) to the LHS term $w_k^\top \delta_{\mathcal{I}_k}$:
\begin{equation}
\|w_k\| \|\delta_{\mathcal{I}_k}\| \ge w_k^\top \delta_{\mathcal{I}_k} > w_y^\top (h + \delta_{\mathcal{I}_y})
\end{equation}
Dividing by $\|w_k\|$ yields the lower bound on the forbidden energy:
\begin{equation}
\|\delta_{\mathcal{I}_k}\| > \frac{w_y^\top h + w_y^\top \delta_{\mathcal{I}_y}}{\|w_k\|}
\end{equation}
This confirms that for misclassification to occur, the projection of noise onto the forbidden subspace $\|\delta_{\mathcal{I}_k}\|$ must exceed the geometric margin defined by the RHS.
\end{proof}

\subsection{Proof of Lemma~\ref{lem:noise_filtering}}
\label{proof:lemma1}
\begin{proof}
Let the perturbation vector $\delta \in \mathbb{R}^D$ be sampled from an isotropic Gaussian distribution $\delta \sim \mathcal{N}(0, \frac{\sigma^2}{D}I_D)$. We analyze the energy projected onto a specific class-conditional subspace $\mathcal{I}_k$ with cardinality $|\mathcal{I}_k| = K = D/C$. The projection $\delta_{\mathcal{I}_k}$ consists of $K$ independent components $\delta_j \sim \mathcal{N}(0, \frac{\sigma^2}{D})$. Standardizing these variables as $z_j = \frac{\sqrt{D}}{\sigma}\delta_j \sim \mathcal{N}(0, 1)$, we express the projected energy as:
\begin{equation}
\|\delta_{\mathcal{I}_k}\|^2 = \sum_{j \in \mathcal{I}_k} \delta_j^2 = \frac{\sigma^2}{D} \sum_{j=1}^K z_j^2 = \frac{\sigma^2}{D} Q_K,
\end{equation}
where $Q_K$ follows a Chi-squared distribution with $K$ degrees of freedom. We seek to bound the probability that this energy exceeds the geometric margin $\tau$:
\begin{equation}
P(\|\delta_{\mathcal{I}_k}\|^2 \ge \tau) = P\left( Q_K \ge \frac{D\tau}{\sigma^2} \right).
\end{equation}
Applying the Chernoff bound with parameter $t > 0$:
\begin{align}
P\left( Q_K \ge \frac{D\tau}{\sigma^2} \right) &\le \inf_{0 < t < 1/2} \frac{\mathbb{E}[e^{tQ_K}]}{e^{t \frac{D\tau}{\sigma^2}}} \\
&= \inf_{0 < t < 1/2} \exp\left( -t \frac{D\tau}{\sigma^2} - \frac{K}{2}\ln(1-2t) \right).
\end{align}
Minimizing the exponent $g(t)$ with respect to $t$ yields the optimal parameter $t^*$:
\begin{equation}
\frac{\partial g}{\partial t} = -\frac{D\tau}{\sigma^2} + \frac{K}{1-2t} = 0 \implies t^* = \frac{1}{2}\left(1 - \frac{K\sigma^2}{D\tau}\right).
\end{equation}
Substituting $K = D/C$, we have $t^* = \frac{1}{2}(1 - \frac{\sigma^2}{C\tau})$. The condition $\tau > \frac{\sigma^2}{C}$ ensures $0 < t^* < 1/2$. Substituting $t^*$ back into the bound:
\begin{align}
P(\|\delta_{\mathcal{I}_k}\|^2 \ge \tau) &\le \exp\left( -\frac{D\tau}{2\sigma^2}\left(1 - \frac{\sigma^2}{C\tau}\right) - \frac{D}{2C}\ln\left(\frac{\sigma^2}{C\tau}\right) \right) \\
&= \exp\left( -D \left[ \frac{\tau}{2\sigma^2} - \frac{1}{2C} + \frac{1}{2C}\ln\left(\frac{C\tau}{\sigma^2}\right)^{-1} \right] \right) \\
&= \exp\left( -D \left[ \frac{1}{2C}\left(\frac{C\tau}{\sigma^2} - 1 - \ln\frac{C\tau}{\sigma^2}\right) \right] \right).
\end{align}
Defining the rate constant $\mathcal{A} = \frac{1}{2C}(\frac{C\tau}{\sigma^2} - 1 - \ln\frac{C\tau}{\sigma^2})$, which is strictly positive for $\frac{C\tau}{\sigma^2} > 1$, we conclude:
\begin{equation}
P(\|\delta_{\mathcal{I}_k}\|^2 \ge \tau) \le \exp(-D \cdot \mathcal{A}).
\end{equation}
\end{proof}

\section{Regularization Design Analysis}
\label{app:regularization}

In Section \ref{l2choice}, we motivated the selection of the squared $L_2$ norm for the geometric penalty based on its optimization stability and smooth gradient field. To justify this design, we provide a comparative analysis of several candidate regularizers representing distinct geometric inductive biases. We evaluate these variants based on their ability to maintain standard accuracy while providing stability under severe supervision corruption.

\subsection{Experimental Configuration}
For the supervision corruption tasks, we first train the ResNet-18 \cite{he2016deep} backbone from scratch for 200 epochs on CIFAR-100 and ImageNet-100 \cite{deng2009imagenet}. For the experiments on CIFAR-100, we employ a batch size of 256, a weight decay of 0.0005, and an initial learning rate of 0.1 regulated by a Cosine Annealing schedule. For ImageNet-100, we utilize a batch size of 256 and a weight decay of 0.0001 while maintaining the same learning rate and scheduler settings. We apply the regularization penalty $\lambda=3$ from the initial epoch. The value of the regularization coefficient $\lambda$ was determined through an extensive grid search over the range of $[0.1, 5]$, where $\lambda=3$ was found to provide the superior balance between preserving standard classification performance and maximizing robustness against high-intensity label noise. To ensure fair comparison, we configured all baseline methods using the optimal hyperparameter settings reported in their respective original publications (e.g., center weight $\lambda=0.003$ for Center Loss, temperature $\tau=0.07$ for SimCLR/NCL), ensuring they operate at their established state-of-the-art capacity. Linear evaluation is performed by freezing the backbone and training a linear classifier for 50 epochs across a noise spectrum ranging from 0\% to 90\% in 10\% increments.

\subsection{Dynamics of Geometric Penalty Variants}

The \textbf{$L_1$ Sparsity Baseline} serves as a standard choice for inducing sparsity within the representation \cite{tibshirani1996regression}. While it effectively isolates feature activations, it possesses a non-differentiable gradient at the origin, which can potentially destabilize the optimization process during the refinement of fine-grained features.
\begin{equation}
\mathcal{L}_{L_1} = \|\mathbf{M}(y) \odot \mathbf{h}\|_1
\end{equation}

The \textbf{Cosine Margin Penalty} borrows principles from margin-based metric learning. This formulation penalizes representations based on the cosine similarity between the feature vector and competing class centroids relative to the target centroid, incorporating a margin $\delta$ to enforce separation.
\begin{equation}
\mathcal{L}_{\text{Margin}} = \max_{k \neq y}(\cos(h, \mu_k)) - \cos(h, \mu_y) + \delta
\end{equation}

The \textbf{Subspace Energy Ratio} constrains the logarithmic ratio of energy residing in the forbidden subspace relative to the energy within the active subspace. This approach focuses on the relative distribution of energy across the feature manifold rather than penalizing absolute activation magnitudes.
\begin{equation}
\mathcal{L}_{\text{Ratio}} = \log\left(1 + \frac{\|h_{\text{off}}\|_2^2}{\|h_{\text{aligned}}\|_2^2}\right)
\end{equation}

The \textbf{Orthogonal Projection Penalty} represents a direct subspace constraint. It penalizes any component of the feature vector $h$ that resides in the orthogonal complement of the class-specific subspace, as defined by a projection matrix $P_y$ derived from class-specific statistics.
\begin{equation}
\mathcal{L}_{\text{Proj}} = \|(I - P_y)h\|_2^2
\end{equation}

The \textbf{Squared $L_2$ Penalty (Selected Design)} utilizes a smooth quadratic penalty to suppress activations in forbidden dimensions. By indexing these dimensions via a class-conditional mask $M(y)$, the objective provides a restorative force that aligns the latent topology with the semantic hierarchy of the dataset.
\begin{equation}
\mathcal{L}_{L_2} = \frac{1}{|\mathcal{O}_y|} \sum_{j \in \mathcal{O}_y} h_j^2
\end{equation}

\textbf{Results.} As shown in Table \ref{tab:ablation_variants_transposed}, the $L_2$ penalty consistently outperforms all alternative geometric constraints across the noise spectrum. For example, at 90\% noise, the $L_2$ formulation achieves 74.21\% accuracy, significantly surpassing the $L_1$ baseline (69.14\%) and the runner-up Subspace Energy Ratio (73.29\%). This indicates that while sparsity-inducing ($L_1$) or relative ($L_{\text{Ratio}}$) penalties offer some resilience, the smooth gradient field of the quadratic penalty provides superior optimization stability under extreme corruption. Therefore, we conclude that the  $L_2$ norm is the optimal design choice for implementing the CCAR objective, as it maximizes robustness without compromising the representational capacity required for clean classification.

\begin{table*}[t]
\caption{Comparison of Linear Probing Accuracy (\%) across geometric penalty variants on CIFAR-100. By transposing the view, we observe the stability of each method as noise increases. The $L_2$ Penalty consistently outperforms sparsity ($L_1$) and metric-based constraints. Results are averaged over multiple seeds (mean $\pm$ std).}
\label{tab:ablation_variants_transposed}
\begin{center}
\begin{small}
\begin{tabular}{cccccc}
\toprule
\textsc{Noise (\%)} & \textsc{$L_1$ Baseline} & \textsc{Cosine Margin} & \textsc{Energy Ratio} & \textsc{Orthogonal Proj.} & \textbf{\textsc{$L_2$ Penalty}} \\
\midrule
0  & 73.08$\pm$0.21 & 74.73$\pm$0.39 & 74.13$\pm$0.32 & 75.65$\pm$0.28 & \textbf{75.68$\pm$0.15} \\
10 & 72.99$\pm$0.15 & 74.62$\pm$0.12 & 74.10$\pm$0.36 & 75.02$\pm$0.28 & \textbf{75.41$\pm$0.31} \\
20 & 72.88$\pm$0.11 & 74.46$\pm$0.39 & 73.96$\pm$0.35 & 74.99$\pm$0.16 & \textbf{75.26$\pm$0.15} \\
30 & 72.80$\pm$0.16 & 74.51$\pm$0.19 & 73.92$\pm$0.26 & 75.07$\pm$0.23 & \textbf{75.26$\pm$0.19} \\
40 & 72.93$\pm$0.28 & 74.46$\pm$0.14 & 74.03$\pm$0.19 & 75.08$\pm$0.21 & \textbf{75.25$\pm$0.24} \\
50 & 72.78$\pm$0.34 & 74.28$\pm$0.16 & 73.85$\pm$0.25 & 74.97$\pm$0.28 & \textbf{75.15$\pm$0.11} \\
60 & 72.73$\pm$0.28 & 74.25$\pm$0.15 & 73.74$\pm$0.12 & 74.73$\pm$0.38 & \textbf{75.11$\pm$0.39} \\
70 & 72.62$\pm$0.34 & 74.06$\pm$0.19 & 73.81$\pm$0.13 & 74.69$\pm$0.31 & \textbf{75.00$\pm$0.23} \\
80 & 72.36$\pm$0.14 & 73.81$\pm$0.25 & 73.73$\pm$0.11 & 74.02$\pm$0.37 & \textbf{74.67$\pm$0.18} \\
90 & 69.14$\pm$0.30 & 72.75$\pm$0.19 & 73.29$\pm$0.26 & 72.89$\pm$0.26 & \textbf{74.21$\pm$0.16} \\
\bottomrule
\end{tabular}
\end{small}
\end{center}
\vskip -0.1in
\end{table*}

\section{Additional Empirical Analysis}
\label{app:additional_analysis}

In Section \ref{section4}, we established that minimizing the CCAR objective theoretically compresses the covariance trace in forbidden directions (Theorem \ref{thm:trace_minimization}) and partitions the latent space into orthogonal subspaces (Theorem \ref{thm:stability_bound}). In this section, we provide empirical verification of these geometric phenomena using diagnostic metrics computed on the ImageNet-100 validation set.

\subsection{Fisher Discriminant Analysis}

To quantitatively monitor the emergence of the block-diagonal structure, we track the Fisher Discriminant Ratio (FDR) throughout the training trajectory. The FDR serves as a proxy for linear separability, defined as the ratio of the inter-class scatter trace to the intra-class scatter trace \cite{fisher1936use}: $J = \text{Tr}(\Sigma_b) / \text{Tr}(\Sigma_w)$.

\begin{figure}[ht]
\vskip 0.1in
\begin{center}
    \includegraphics[width=0.4\columnwidth]{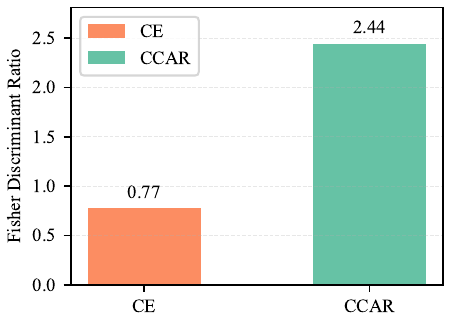}
    \caption{Fisher Ratio analysis on CIFAR-100. CCAR ($J=2.44$) shows significantly stronger class separation than CE ($J=0.77$).}
    \label{fig:fisher_ratio}
\end{center}
\vskip -0.1in 
\end{figure}

\textbf{Results.} As illustrated in Figure \ref{fig:fisher_ratio}, CCAR maintains a significantly higher Fisher Ratio compared to the Cross-Entropy (CE) baseline throughout the training process. Notably, while the FDR for the baseline degrades rapidly as supervision noise increases, CCAR exhibits robust separability. This confirms that the explicit energy minimization penalty continuously suppresses intra-class variance ($\Sigma_w$), effectively maximizing the FDR even when the label signal is unreliable.

\subsection{Verification of Covariance Compression}
\label{app:covariance_compression}

Theorem \ref{thm:trace_minimization} states that the objective minimizes the trace of the intra-class covariance matrix in forbidden directions. We verify the effectiveness of this minimization by analyzing the Mean Cluster Radii for the 100 classes in ImageNet-100.

\begin{figure}[ht]
\vskip 0.1in
\begin{center}
    \includegraphics[width=0.6\columnwidth]{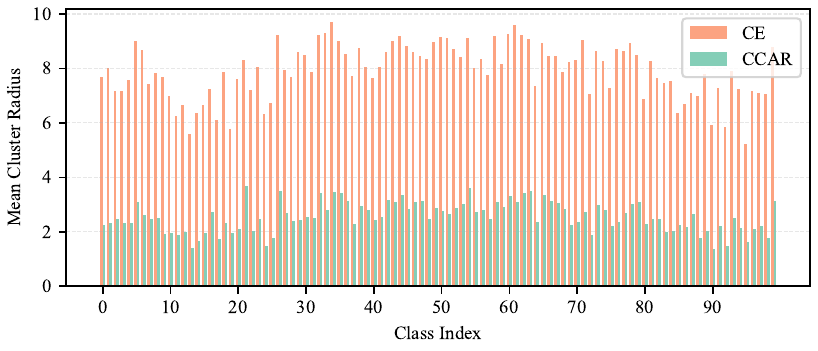}
    \caption{Impact of CCAR on Intra-Class Compactness: Mean Cluster Radii Analysis on ImageNet-100. CCAR yields consistently lower radii compared to the baseline, confirming that the regularization effectively compresses intra-class variance and induces tighter class manifolds.}
    \label{fig:cluster_radii}
\end{center}
\vskip -0.1in
\end{figure}

\textbf{Results.} As illustrated in Figure \ref{fig:cluster_radii}, the clusters formed by CCAR are generally more compact than those of the baseline. The reduction in mean radius indicates that the regularization successfully compresses intra-class variance. This compression appears to selectively target the directions of variation that do not align with the class-specific subspaces, resulting in tighter class manifolds that are spatially distant from the decision boundaries of competing classes.

\section{Empirical Verifications of Theoretical Analysis on CCAR}
\label{app:empirical_verification}

In Section \ref{section4}, we established that minimizing the CCAR objective theoretically compresses the covariance trace in forbidden directions (Theorem \ref{thm:trace_minimization}) and partitions the latent space into optimal orthogonal subspaces. In this section, we provide empirical verification of these phenomena using diagnostic metrics on ImageNet-100.

\subsection{Verification of Optimal Representations}
\label{app:optimal_representations}

To verify the theoretical analysis of the optimal solutions derived in Theorem \ref{thm:trace_minimization} (Trace Minimization), we investigate the internal topology of the learned representations. We examine two complementary metrics: the Expected Activation (EA) distribution, which measures where feature energy is concentrated, and the Eigenvalue Distribution, which measures the effective rank and capacity of the representation.

\begin{figure}[ht]
\vskip 0.1in
\begin{center}
    \begin{minipage}[b]{0.40\columnwidth}
        \centering
        \includegraphics[width=\linewidth]{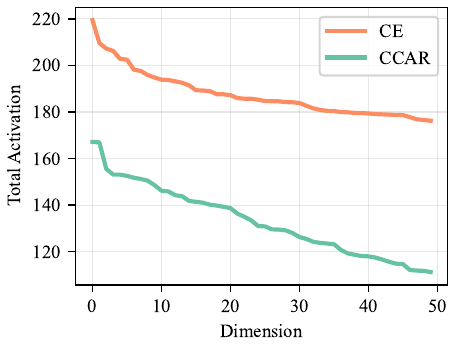}
        \centerline{\small (a) Distribution of the 50 largest EA values}
    \end{minipage}
    \hspace{0.5cm} 
    \begin{minipage}[b]{0.40\columnwidth}
        \centering
        \includegraphics[width=\linewidth]{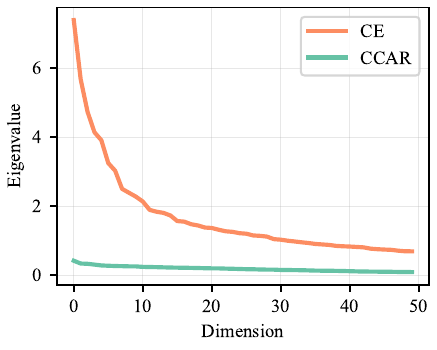}
        \centerline{\small (b) Distribution of the 50 largest eigenvalues}
    \end{minipage}
    
    \caption{Analysis of learned features of CE and CCAR on ImageNet-100.}
    \label{fig:feature_analysis}
\end{center}
\vskip -0.25in
\end{figure}

\textbf{Energy Concentration (EA).} As illustrated in Figure \ref{fig:feature_analysis}(a), we observe a distinct difference in energy allocation. The features learned by CCAR exhibit a concentrated activation profile where the majority of signal energy is contained within the top-ranked dimensions. This alignment is consistent with the subspace constraints imposed by our objective, confirming that the network effectively utilizes the designated active dimensions while suppressing variance in the forbidden directions. In contrast, the baseline CE model shows a uniform distribution of energy across the spectrum, indicating a dispersed representation without explicit subspace localization.

\textbf{Trace Minimization (Eigenvalues).} Furthermore, the eigenvalue distribution in Figure \ref{fig:feature_analysis}(b) reveals the spectral signature of trace minimization. The CCAR eigenspectrum exhibits a sharp decay followed by a flat tail near zero, confirming that variance in forbidden directions has been effectively eliminated. Crucially, the spectrum remains flat in the leading dimensions, unlike the baseline which exhibits a slower decay. This suggests that CCAR avoids dimensional collapse \citep{jing2020implicit}; instead of collapsing to a single point, the representation utilizes the full capacity of the allowed block ($I_c$) to encode intra-class variations, approximating the ideal block-diagonal geometry described in Theorem \ref{thm:trace_minimization} more closely than the unregularized baseline \citep{papyan2020prevalence}.

\section{Extended Feature Properties}
\label{app:feature_properties}

Beyond the theoretical verification, we investigate the practical properties of the learned features that emerge from this geometric structuring, specifically focusing on intrinsic linearity and semantic interpretability.

\subsection{Intrinsic Linearity and Separability}
\label{sec:intrinsic_linearity}

A key theoretical requirement for disentangled representations is intrinsic linear separability: semantic information should be accessible via a single affine transformation, without requiring complex non-linear decoding \citep{alain2016understanding}. To verify this, we compare the classification accuracy of a Linear Probe against a Multi-Layer Perceptron (MLP) Probe (two layers with ReLU \citep{nair2010rectified}) trained on frozen features.

\begin{table}[!h]
\caption{Classification accuracy (\%) of Linear vs. MLP probes. The bottom row ($\Delta$) highlights the significant stability gain of CCAR, particularly in the high-noise regime.}
\label{tab:linearity_analysis}
\begin{center}
\begin{small}
\begin{tabular}{lcccc}
\toprule
& \multicolumn{2}{c}{\textbf{\textsc{Clean Data}}} & \multicolumn{2}{c}{\textbf{\textsc{90\% Label Noise}}} \\
\cmidrule(lr){2-3} \cmidrule(lr){4-5}
\textsc{Method} & \textsc{Linear} & \textsc{MLP} & \textsc{Linear} & \textsc{MLP} \\
\midrule
Control (CE) & 77.38 & 77.34 & 61.28 & 66.36 \\
\textbf{CCAR (Ours)} & \textbf{78.16} & \textbf{78.20} & \textbf{73.18} & \textbf{76.20} \\
\midrule
$\Delta$ & \textit{+0.78} & \textit{+0.86} & \textit{+11.90} & \textit{+9.84} \\
\bottomrule
\end{tabular}
\end{small}
\end{center}
\vskip -0.1in
\end{table}

As summarized in Table \ref{tab:linearity_analysis}, CCAR consistently outperforms the Cross-Entropy (CE) baseline. On clean data, CCAR achieves a Linear accuracy of 78.16\%, surpassing the baseline's 77.38\%. This advantage becomes critical in high-noise regimes (90\% label noise), where the baseline's linear separability collapses to 61.28\%. Under these conditions, the baseline relies heavily on the MLP to recover performance (+5.08\% boost), indicating that its feature manifold becomes entangled and curved. In contrast, CCAR maintains a high linear accuracy of 73.18\%, with the linear probe capturing the vast majority of the semantic structure. This confirms that the CCAR manifold remains intrinsically flat and robust, even when the supervision signal is heavily corrupted.

\subsection{Visual Interpretability}
\label{app:visual_interpretability}

Finally, we provide a qualitative assessment of the semantic information encoded within the learned features. We visualize the top-activating images for specific neurons to determine if the geometric constraints translate into semantic consistency \cite{zeiler2014visualizing}. We select neurons based on their maximum activation values on the test set and display the images that elicit the strongest response, a method akin to Network Dissection \cite{bau2017network}.

The visualizations in Figure \ref{fig:vis_cifar} and Figure \ref{fig:vis_imagenet} reveal a high degree of semantic coherence in the CCAR representations. We observe that neurons within the assigned subspaces tend to fire exclusively for concepts related to that class. For example, specific neurons within a vehicle-assigned subspace consistently activate on wheel patterns or metallic textures, while remaining inactive for biologically textured images. This aligns with findings that standard CNNs are biased towards texture rather than shape \citep{geirhos2019imagenet}.

\begin{figure*}[h]
\vskip 0.1in
\begin{center}
    \begin{minipage}[b]{0.48\textwidth}
        \centering
        \includegraphics[width=\linewidth]{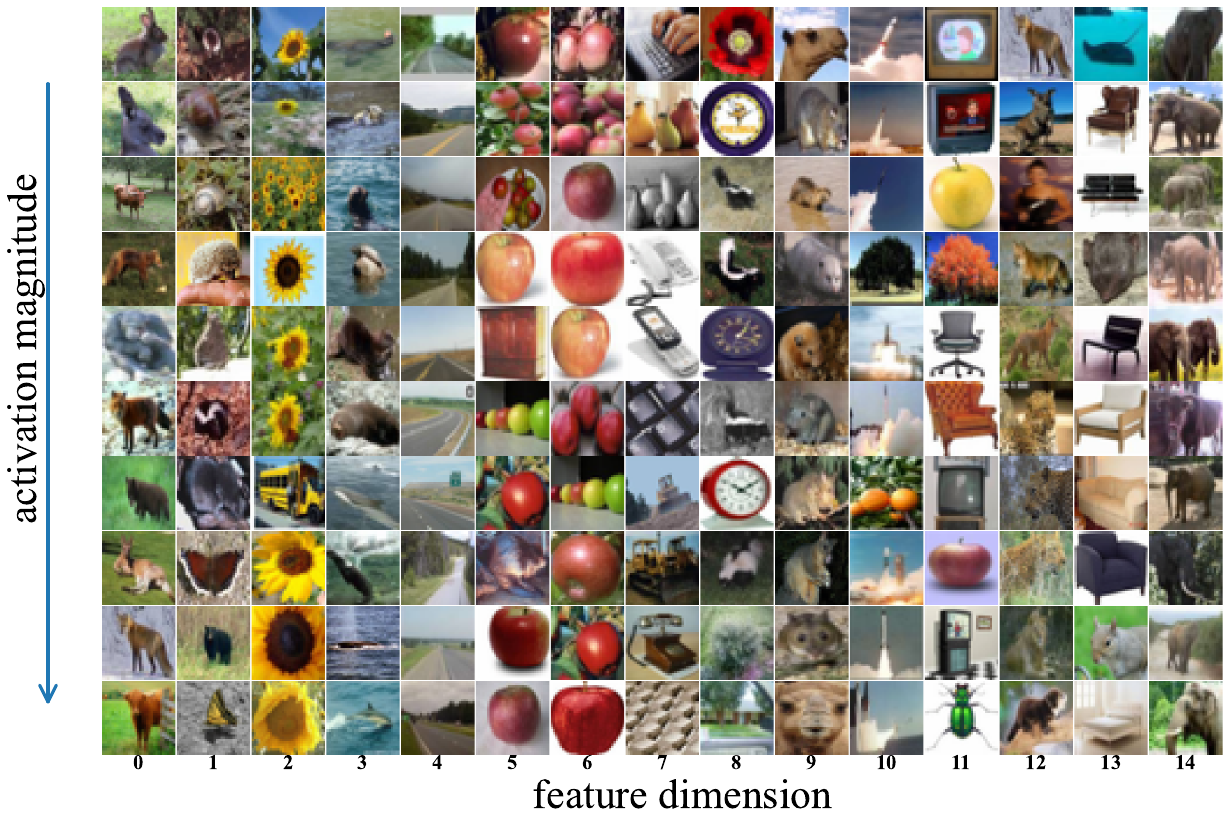}
        \centerline{\small (a) Cross Entropy}
    \end{minipage}
    \hfill
    \begin{minipage}[b]{0.48\textwidth}
        \centering
        \includegraphics[width=\linewidth]{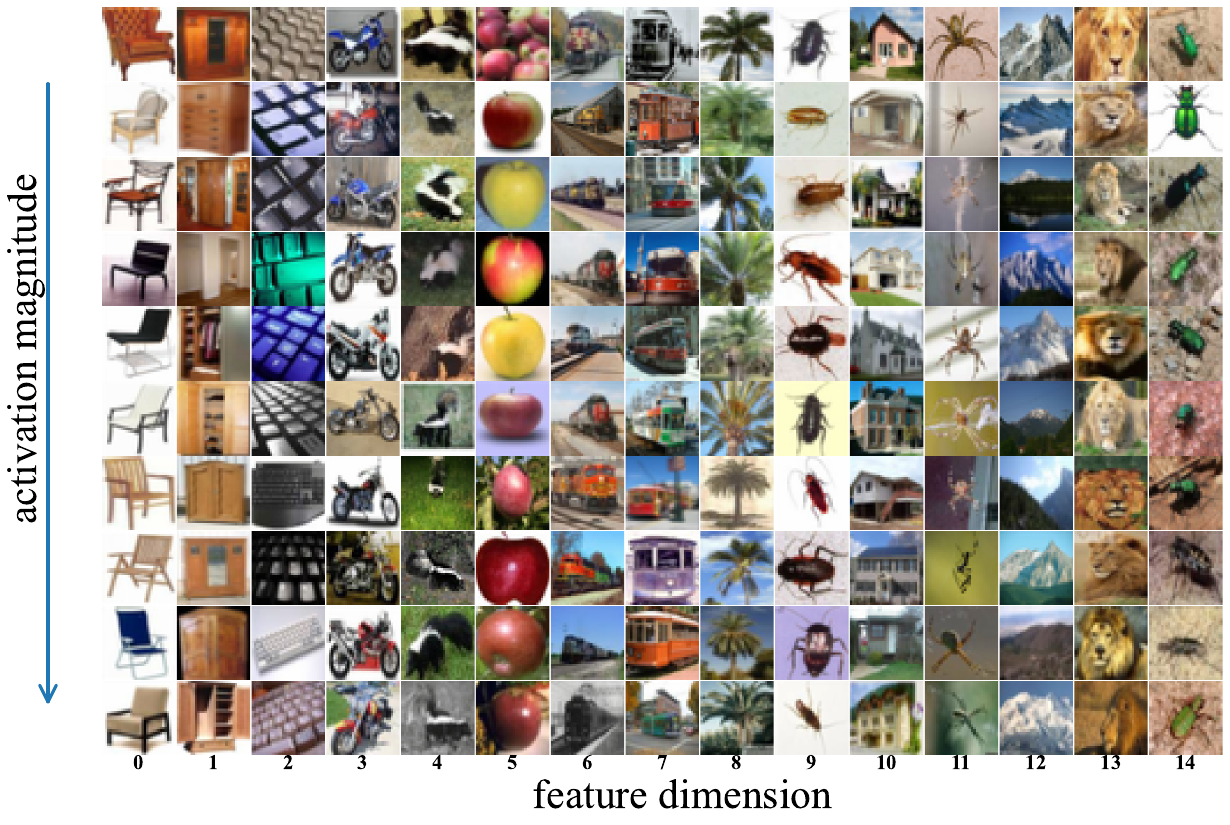}
        \centerline{\small (b) Class-Conditional Activation Regularization (Ours)}
    \end{minipage}
    
    \caption{Visualization of CIFAR-100 test samples with the largest values along each feature dimension (sorted according to activation values).}
    \label{fig:vis_cifar}
\end{center}
\vskip -0.1in
\end{figure*}

Crucially, this visualization highlights the property of intra-class polysemanticity discussed in Section \ref{section3}. While the subspace is class-specific, the individual neurons within that subspace capture diverse attributes of the class. One neuron may specialize in the frontal view of an object, while another captures the side profile. This contrasts with the baseline, where

\begin{figure*}[h]
\vskip 0.1in
\begin{center}
    \begin{minipage}[b]{0.48\textwidth}
        \centering
        \includegraphics[width=\linewidth]{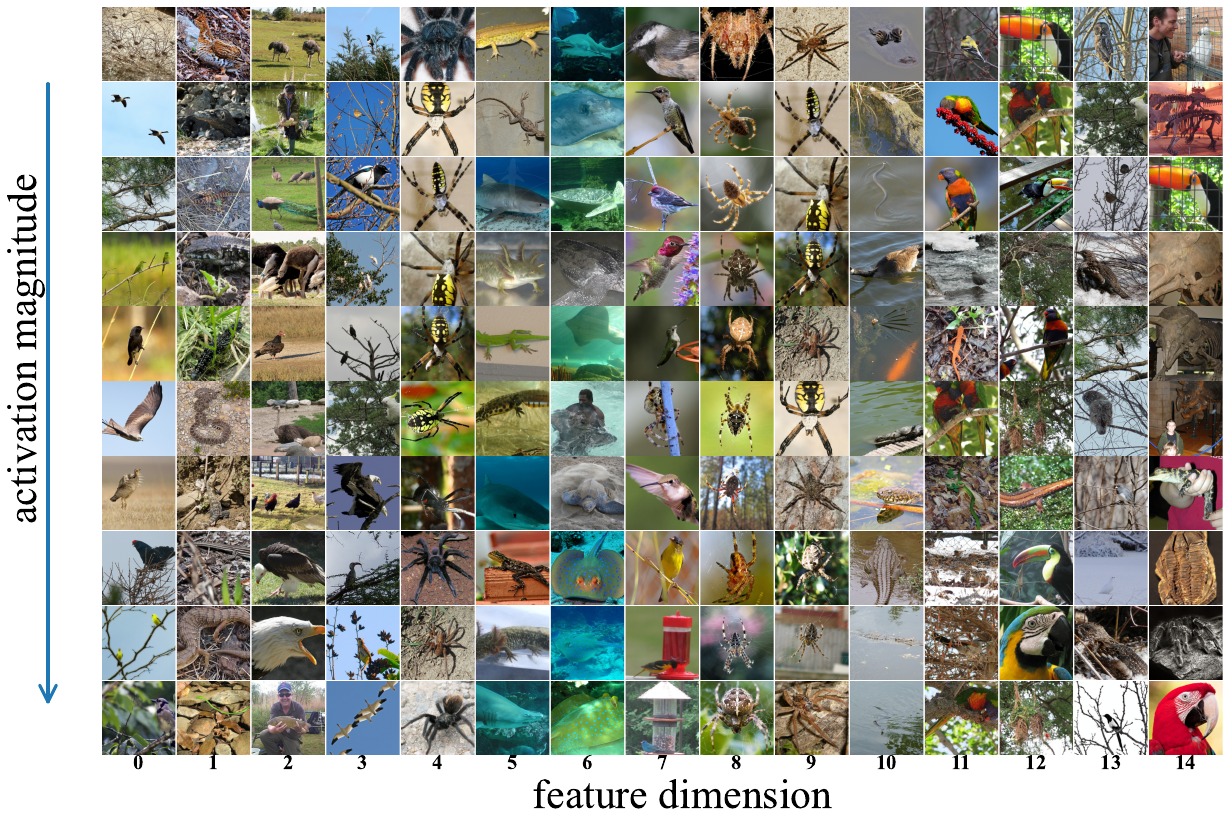} 
        \centerline{\small (a) Cross Entropy}
    \end{minipage}
    \hfill
    \begin{minipage}[b]{0.48\textwidth}
        \centering
        \includegraphics[width=\linewidth]{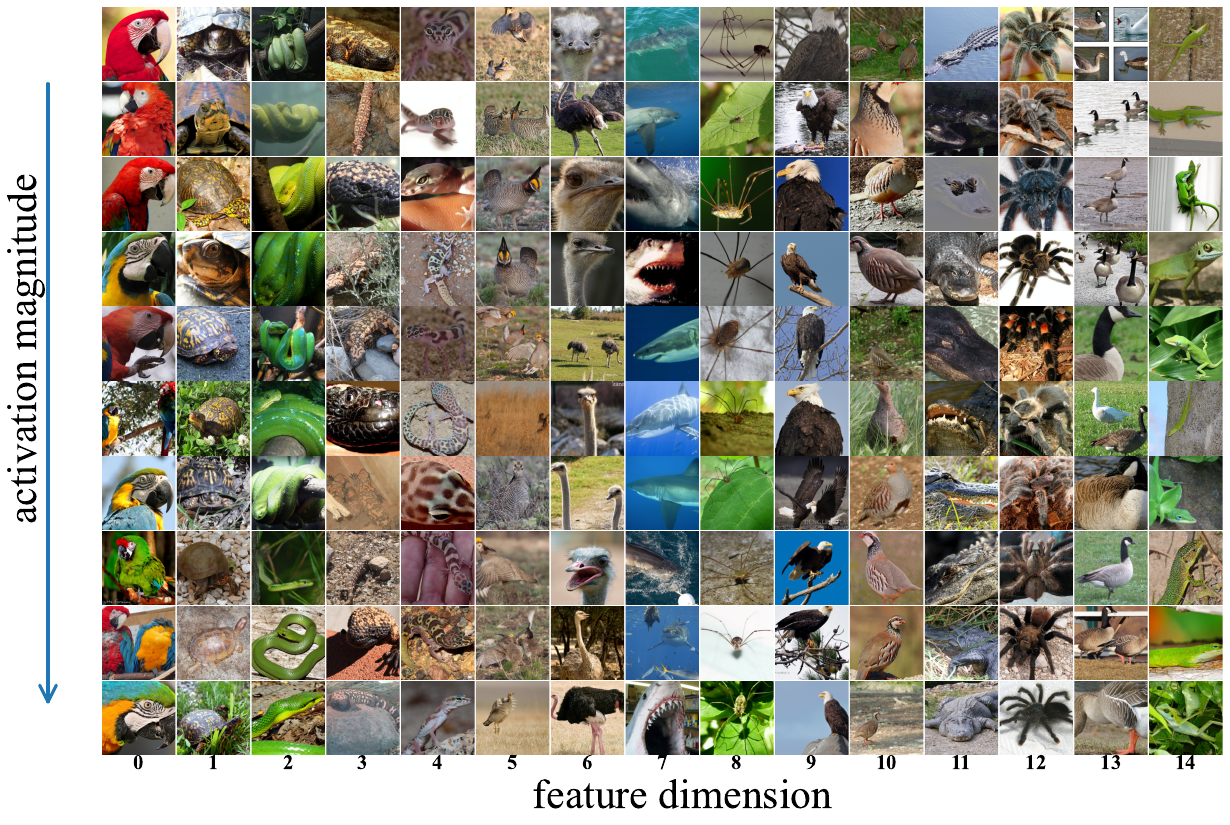}
        \centerline{\small (b) Class-Conditional Activation Regularization (Ours)}
    \end{minipage}
    
    \caption{Visualization of ImageNet-100 test samples with the largest values along each feature dimension (sorted according to activation values).}
    \label{fig:vis_imagenet}
\end{center}
\vskip -0.1in
\end{figure*}

top activating images for a single neuron often span multiple semantically unrelated classes. The semantic purity observed in CCAR confirms that the geometric regularization forces the network to align its internal feature axes with meaningful semantic factors, effectively filtering out spurious correlations that often bridge different classes in unregularized models.

\section{Details of Experiment Setup}
\label{app:experiment_setup}

\subsection{Implementation and Evaluation Protocol}
\label{g1}
In our implementation, the encoder $f_\theta$ consists of a ResNet-18 backbone $g$ \cite{he2016deep}. For all experiments, we apply the Class-Conditional Geometric Regularization at the penultimate feature layer $h = g(x)$. The regularization term $\mathcal{L}_{\text{CCAR}}$ is computed by applying class-conditional binary masks $M(y)$ to these activations. We select the regularization strength $\lambda=3$ based on a pilot study on CIFAR-100 where it demonstrated the optimal balance between subspace compression and gradient stability.

\textbf{Evaluation.} When evaluating representation quality, we discard the final classification weights and utilize the backbone outputs. For linear probing, we train a single linear layer for 50 epochs with a learning rate of 0.1 and Cosine Annealing \cite{loshchilov2017sgdr}. For the manifold linearity experiments in Appendix \ref{app:feature_properties}, we utilize a two-layer MLP probe with a hidden dimension of 2048 to assess if non-linear decoding can recover information from entangled baseline features.

\subsection{Geometric Analysis Implementation}
\label{g2}
In this section, we define the geometric metrics utilized for the mechanism verification in Section 5.2 and the theoretical proofs in Appendix \ref{app:empirical_verification}. All metrics are computed on the ImageNet-100 validation set using the ResNet-18 backbone trained for 200 epochs.

To visualize the internal topology of the latent manifold, we compute the \textbf{Feature Correlation Matrix} ($C$). Each entry $C_{ij}$ represents the cosine similarity between the activations of feature dimensions $i$ and $j$, computed over the expectation of the validation set:
\begin{equation}
C_{ij} = \frac{|\mathbb{E}[h_i^\top h_j]|}{\|h_i\|\|h_j\|}
\end{equation}
To reveal the block-diagonal structure, we sort the feature dimensions according to their assigned class indices. Dimensions with zero activation variance (dead neurons) are excluded from this visualization to ensure the heatmap accurately reflects the geometry of the active manifold.

We quantify the representational efficiency of the encoder using \textbf{Feature Sparsity}, which measures the fraction of the feature space that remains inactive for a given input. A dimension $h_j$ is considered inactive if its magnitude falls below a numerical threshold $\epsilon = 1e^{-5}$. The sparsity score $S(x)$ for a sample $x$ is defined as:
\begin{equation}
S(x) = \frac{1}{D} \sum_{j=1}^D \mathbb{I}(|h_j(x)| < \epsilon)
\end{equation}
We report the mean sparsity averaged over the entire test set.

To assess the semantic purity of the learned features, we utilize the \textbf{Class Consistency Rate (CCR)}. This metric evaluates whether individual neurons specialize in concepts relevant to their assigned class subspace. For each neuron $j$, we retrieve the set of top-$K$ maximally activating images $\{x^{(1)}, \dots, x^{(K)}\}$ from the validation set (with $K=10$). The consistency score is calculated as the proportion of these images that belong to the class $c$ assigned to that neuron's subspace $\mathcal{I}_c$:
\begin{equation}
\text{CCR}_j = \frac{1}{K} \sum_{k=1}^K \mathbb{I}(y(x^{(k)}) = c)
\end{equation}
The global CCR is reported as the mean score across all active neurons.

Finally, to quantitatively monitor the compression of class manifolds referenced in Appendix \ref{app:additional_analysis}, we track the \textbf{Fisher Discriminant Ratio (FDR)}. This metric serves as a proxy for linear separability and is defined as the ratio of the trace of the inter-class scatter matrix ($\Sigma_b$) to the intra-class scatter matrix ($\Sigma_w$) \cite{fisher1936use}:
\begin{equation}
\text{FDR} = \frac{\text{Tr}(\Sigma_b)}{\text{Tr}(\Sigma_w)}
\end{equation}
We track this metric at regular intervals during training; high values indicate that the class manifolds have collapsed into compact, widely separated clusters, while low values indicate diffuse and overlapping distributions.

\subsection{Experiment Details of Input Stability}
\label{g3}
To evaluate the perturbation stability properties derived in Theorem \ref{thm:stability_bound}, we subject models trained under 40\% symmetric label noise to diverse input-level corruptions. We characterize robustness against random pixel-level distortions by adding isotropic Gaussian noise $\delta \sim \mathcal{N}(0, \sigma^2 I)$ to the normalized test samples. Adversarial stability is assessed through both single-step and iterative optimization-based attacks. We utilize the Fast Gradient Sign Method (FGSM) \cite{goodfellow2014explaining} with perturbation magnitudes $\epsilon \in \{2/255, 4/255, 8/255\}$.

Iterative robustness is measured using Projected Gradient Descent (PGD) \cite{madry2018towards} with 20 iterations, employing a total budget of $\epsilon=8/255$ and a step size of $\alpha=2/255$. All robustness evaluations are performed strictly on the ResNet-18 backbones preserved from the training phase to ensure that the reported metrics reflect the intrinsic stability of the learned geometric manifold rather than the capacity of a specific classification head. It is worth noting that we maintain the same data normalization and resolution parameters used during the training phase to ensure that the results are not biased by variations in the input pipeline.

\subsection{Experiment Details of Supervision Corruption}
\label{g4}
For the supervision corruption tasks, during the training process, we utilize ResNet-18 \cite{he2016deep} as the backbone and train the models from scratch for 200 epochs on CIFAR-100 \cite{krizhevsky2009learning}. and ImageNet-100 \cite{deng2009imagenet}. For the experiments on CIFAR-100, we employ a batch size of 256, a weight decay of 0.0005, and an initial learning rate of 0.1 regulated by a Cosine Annealing schedule \cite{loshchilov2017sgdr}. For ImageNet-100, we utilize a batch size of 256 and a weight decay of 0.0001 while maintaining the same learning rate and scheduler settings. We apply the regularization penalty $\lambda=3$ from the initial epoch without any warmup period.

To evaluate robustness, symmetric label noise is injected from 0\% to 90\% in 10\% increments. During the linear evaluation stage, we freeze the backbone and train a classifier for 50 epochs following the standard protocol. For the retrieval experiments, we utilize cosine similarity to identify the top-10 nearest neighbors in the feature space and report the mean Average Precision at 10. It is worth noting that we apply these settings consistently across both datasets to ensure a fair comparison of the geometric properties learned by the model.

\subsection{Baseline Implementation Details}
\label{app:baseline_implementation}
To ensure reproducibility and fair comparison within the supervised benchmark, we explicitly define the implementation of the contrastive baselines referenced in Section 5.1. Since SimCLR \cite{chen2020simple} and NCL \cite{wang2024non} are typically utilized in self-supervised or distinct learning paradigms, we adapt them to the supervised setting as \textbf{auxiliary regularization objectives}. This ensures a fair comparison with CCAR, where the geometric constraints are applied alongside the standard supervision. The total loss is formulated as $\mathcal{L}_{\text{total}} = \mathcal{L}_{\text{CE}} + \lambda \mathcal{L}_{\text{base}}$, where both objectives are optimized jointly from scratch.

\begin{itemize}
    \item \textbf{SimCLR (Contrastive):} We attach a non-linear projection head (2-layer MLP) to the backbone and minimize the NT-Xent loss on the projected features. This objective serves as a regularizer to encourage instance-level discrimination alongside the class-level supervision provided by Cross-Entropy.
    \item \textbf{NCL (Non-negative):} We follow the official implementation \cite{wang2024non}, applying non-negative constraints to the projection head weights and utilizing the NCL objective function to encourage sparsity in the learned representations.
    \item \textbf{Center Loss:} We maintain a memory bank of learnable class centroids. The model minimizes the intra-class Euclidean distance between features and their corresponding class centers, effectively reducing the intra-class variance $\Sigma_w$.
\end{itemize}
Crucially, these methods are not used for pre-training; they are integrated directly into the supervised training loop. All baselines utilize the same ResNet-18 backbone, data augmentation pipeline, and training schedule (200 epochs) as the proposed CCAR method to ensure that differences in performance are attributable to the geometric regularization strategy rather than the training protocol.

\end{document}